\theoremstyle{plain}
\newtheorem{theorem}{Theorem}[section]
\newtheorem{proposition}[theorem]{Proposition}
\newtheorem{lemma}[theorem]{Lemma}
\newtheorem{corollary}[theorem]{Corollary}
\theoremstyle{definition}
\newtheorem{definition}[theorem]{Definition}
\newtheorem{assumption}[theorem]{Assumption}
\newtheorem{remark}[theorem]{Remark}
\newcommand{\nobracket}{}
\newcommand{\tmop}[1]{\ensuremath{\operatorname{#1}}}
\newcommand{\tmstrong}[1]{\textbf{#1}}
\title{\LARGE Estimation of Treatment Effects in Extreme and Unobserved Data}
\author{Jiyuan Tan\thanks{Jiyuan Tan was partially supported by NSF Award IIS-2337916.} \quad 
  Jose Blanchet \quad 
  Vasilis Syrgkanis\thanks{Vasilis Syrgkanis was supported by NSF Award IIS-2337916.} 
  }
\date{}
\begin{document}
\maketitle

\begin{abstract}
Causal effect estimation seeks to determine the impact of an intervention from observational data. However, the existing causal inference literature primarily addresses treatment effects on frequently occurring events. But what if we are interested in estimating the effects of a policy intervention whose benefits, while potentially important, can only be observed and measured in rare yet impactful events, such as extreme climate events? The standard causal inference methodology is not designed for this type of inference since the events of interest may be scarce in the observed data and some degree of extrapolation is necessary. Extreme Value Theory (EVT) provides methodologies for analyzing statistical phenomena in such extreme regimes. We introduce a novel framework for assessing treatment effects in extreme data to capture the causal effect at the occurrence of rare events of interest. In particular, we employ the theory of multivariate regular variation to model extremities. We develop a consistent estimator for extreme treatment effects and present a rigorous non-asymptotic analysis of its performance. We illustrate the performance of our estimator using both synthetic and semi-synthetic data. 
\end{abstract}

\section{Introduction}
We are interested in studying the effect of treatment e.g., different policies and drugs, on rare yet impactful events such as large wildfires, hurricanes, tsunamis and climate change.  These kinds of events happen at an extremely low frequency, but they can cause considerable damage to properties and pose serious threats to people's lives. For instance, we may want to know the effect of more infrastructure investment or other kinds of precautions policies on earthquakes. In many applications – from financial risk to environmental policy – it isn’t enough to know how a treatment changes the average outcome; decision-makers care about whether it alters the extreme tail. More formally, we may want to estimate the effect of treatment $D$ on outcome $Y$ conditioning on some extreme events. Estimating this kind of effect can help policymakers evaluate the impact of a policy and choose the best policy to reduce economic loss and save more lives when disasters happen. 

Despite its clear importance, existing methods fall into two largely disconnected strands, each of which cannot fully address this question. One approach comes from the causal inference literature. Causal inference provide a comprehensive framework for counterfactual reasoning. Causal effect estimation is an important problem in this area, which finds wide applications in healthcare, education, business decision-making, and policy evaluation.  Classic causal inference literature mainly focuses on estimating the average effects among certain groups. Little attention is paid to the causal effect on rare events. The scarcity of extreme data makes inference more challenging than in classic settings. As a result, naively applying classic causal inference estimation methods will produce poor results with large statistical error.  For example, when making policies about earthquakes, we are usually unable to see a strong signal from historical data, as large earthquakes rarely occur and there are fewer samples in the dataset.

On the other hand, the Extreme Value Theory (EVT) studies the tail behaviors for statistical distributions, which provides the ideal tools for analyzing rare events. However, this approach does not take the data structure into consideration. In particular, it does not accommodate counterfactual treatments or adjust for covariates, so it cannot tell us what would happen under an intervention. 

To bridge these gaps, we combine causal inference theory with EVT to provide a novel framework for extreme effect measurement. Following researches in EVT \cite{coles2001introduction}, we use a multivariate regularly varying variable $U$ to model extremity. The rare event can be modeled by the event $\{\|U\| > t\}$ for large $t$. Our proposed estimand can be viewed as the Average Treatment Effect (ATE) conditioning on $\{\|U\| > t\}$ with rescaling as $t$ increases to infinity. Detailed definition and explanation can be found in Section 3. Estimation is challenging because the limiting tail distribution is unknown and must be inferred from finite samples. To improve data efficiency and inference accuracy, we combine tail observations with moderate‐frequency data in an extrapolation scheme, leveraging EVT insights alongside causal‐inference techniques to achieve efficient estimation.

To the best of our knowledge, we are not aware of any work in the literature that considers this problem. In this paper, we take the first step to measure the treatment effect on extreme events. To be more specific, our contributions can be summarized as follows. 

\begin{enumerate}
    \item We propose a measure for the treatment effect on rare events named Normalized Extreme Treatment Effect (NETE), which essentially measures the magnitude of treatment on tailed events.  
    \item We develop two consistent estimators for NETE—a doubly robust (DR) estimator and an inverse propensity weighting (IPW) estimator—by combining recent advances in multivariate tail–dependence estimation \cite{zhang2023wasserstein} with double machine learning methodology \cite{chernozhukov2018double}, and derive finite‐sample, non‐asymptotic error bounds.
    \item Synthetic and semi-synthetic experiments demonstrate a good practical performance of our proposed estimator as compared to baseline estimators adapted from standard causal inference literature. 
\end{enumerate}

\paragraph{\textbf{Related Work}} We briefly review some relevant literature in EVT and causal inference. \cite{coles2001introduction} provides a comprehensive introduction to EVT. A large amount of work focuses on the univariate setting \citet{davisonModelsExceedancesHigh1990,leadbetterBasisPeaksThreshold1991,pickandsiiiStatisticalInferenceUsing1975,smithExtremeValueAnalysis1989}. Recently, there have been many recent works on the multivariate generalization of these results \cite{avella2022kernel,zhang2023wasserstein}. Causal effect estimation is a classical problem in causal inference \citep{rubin1974estimating}.  
Common estimators include IPW \citep{rosenbaum1983central},  
DR methods \citep{bang2005doubly, kang2007demystifying,chernozhukov2016double,chernozhukov2017double,chernozhukov2018double},    
Targeted Maximum Likelihood Estimation (TMLE) \citep{vanderlaan2006targeted}. There have been some efforts in the literature trying to combine the two research areas.  \citet{NadineMaxLinear} considers a special kind of Structural Causal Model (SCM) and shows that the proposed SCM is a kind of max-linear model. They also analyze the asymptotic distribution of their model. \citet{chernozhukovExtremalQuantilesValueRisk2006,chernozhukov2011inference,zhangExtremalQuantileTreatment2018,deuberEstimationInferenceExtremal2024} consider the task of estimating the extreme Quantile Treatment Effect (QTE). The other line of work \citet{gnecco2021causal,mhalla2020causal,bodikCausalityExtremesTime2023} uses EVT to help causal discovery. However, we want to point out that the problems these works consider are quite different from our setting. The most similar setting would be extreme QTE estimation \cite{chernozhukovExtremalQuantilesValueRisk2006,chernozhukov2011inference,zhangExtremalQuantileTreatment2018,deuberEstimationInferenceExtremal2024}, but the QTE still cannot capture on how the expectation of the outcome changes under intervention. 

\section{Preliminary} \label{sec:prelin}

\paragraph{Causal Inference.} We use the potential outcome framework \cite{rubin1974estimating} in this paper. Let $X, D, Y$ be the covariate, binary treatment and outcome, respectively. We denote $Y (d)$ to be the potential outcome when the treatment is set to be $d$ and assume consistency i.e., $Y(D) = Y$ throughout the paper. The Average Treatment Effect (ATE) is defined as \[ \text{ATE } =\mathbb{E} [Y (1) - Y (0)]. \]
The ATE measures the effect of a treatment on the outcome $Y$. In the policy-making example, $D$ is an indicator of whether to use the policy or not. $X$ is a covariate that may influence $D$, like the geographic features of a place, which will influence the local government's decision on policies, and $Y$ can be the economic loss. The ATE in this case provides information about how much loss can be saved if a policy is enforced. Under the following exogeneity and overlap assumptions, the ATE can be identified using the g-formula $\mathbb{E} [\mathbb{E} [Y | X, D = 1 \nobracket] -\mathbb{E} [Y | X, D = 0] \nobracket]$.

\begin{assumption} [Exogeneity] \label{assump:exo}
  \label{assump:unconfouding}The data generation process satisfies $(Y (1), Y(0)) \perp D \mid X$. 
\end{assumption}

Besides, the following overlap assumption is
also often needed for non-asymptotic analysis.
\begin{assumption}
  [Overlap]\label{assump:overlap}There exists constant $c \in (0,1/2)$ such that the estimated propensity $\hat{p} (x) \in [c, 1 - c], \  \forall x \in \mathcal{X}$.
\end{assumption}
This assumption ensures that there is no extremely high or low propensity, which can make estimators unstable. This assumption can be easily achieved by clipping the estimated propensity at some threshold, i.e., setting propensity to be $ \max\{\min\{\hat{p}(X), 1 -c\}, c\} $. 

\paragraph{Extreme Value Theory.} The study of extremity is mainly concerned about the tail behaviors of heavy-tailed distributions, which are often modeled by the regularly varying distributions. In this paper, we modeled extremity by multivariate regularly varying distributions. 

\begin{definition} \label{def:mvarying}
  A random variable $U \in \mathbb{R}^d_+$ is called regularly varying with index $\beta \in (0, \infty)$ if for any norm $\| \cdot \|$ in
  $\mathbb{R}^d$ and positive unit sphere $\mathbb{S}^+ = \{ x \in
  \mathbb{R}^d_+ : \| x \| = 1 \}$, there exists a probability measure $S
  (\cdot)$ on $\mathbb{S}^+$ and a sequence $b_n \rightarrow \infty$ such that $n\tmop{P} ((\| U \| / b_n, U / \| U \|) \in \cdot) \overset{w}{\rightarrow} c \cdot \nu_{\beta} \times S$ for some constant $c > 0$, where $\cdot\times \cdot$ is the product measure and $\nu_{\beta} ([r, \infty)) = r^{-\beta}$ for all $r > 0$.
\end{definition}

The parameter $\gamma = 1/\beta$ is called the Extreme Value Index (EVI), which characterizes the decay rate of the tail. Notice that this definition implies that as $b_n \rightarrow \infty$, the norm of and $\| U \|$ and its angle $U / \| U \|$ become asymptotically independent. We will leverage this fact for estimation in later sections. A typical example of regularly varying distributions is the Pareto distribution.

\begin{definition}
  The density of a Pareto (type II) distribution with index $\beta \in (0, \infty)$ is $f (x) = \beta (1 + x)^{- (\beta + 1)}, \forall x > 0$.
\end{definition}

\cref{def:mvarying} implies that the rescaled norm of a regularly varying variable is asymptotically a Pareto distribution. 



{\tmstrong{Notations.}} In the rest of the paper, we use $\| \cdot \|$ and $\| \cdot \|_1$ as a shorthand for $\ell_1$-norm. We use the asymptotic order notation $o (\cdot), O (\cdot)$ and $\Theta (\cdot)$. We use $\mathbb{E}[\cdot]$ to represent expectation. For a matrix $A$, we denote $A_{\cdot,i}$ to be its $i$-th column. $\text{Unif}([a,b])$ is the uniform distribution on interval $[a,b]$ and $\text{Ber}(p)$ is the Bernoulli distribution with expectation $p$.    

\section{Treatment Effect on Extreme Events}\label{sec:main}
\subsection{Extreme Semi-parametric Inference}

While standard causal estimands capture average effects of $D$ on $Y$, they obscure what happens in the tails—i.e., when rare, high‐impact events occur.  To address this, we model rare events with an explicit noise term $U$. The data we consider is of the form $\{(X_i, D_i, Y_i, U_i)\}_{i = 1}^N$, where $X$, $D$, and $Y$ are as defined in Section \ref{sec:prelin}, and $U$ is an independent extreme noise vector.  We use $\|U\|$ to model the severity of rare events—large norms indicate more extreme realizations.  For example, in a hurricane‐loss application, $U$ might be the vector of maximum wind speed, rainfall, and storm surge; $X$ the region’s location; $D$ the level of infrastructure investment; and $Y$ the resulting economic loss.

In what follows, we introduce a novel estimand that quantifies the causal effect of $D$ on $Y$ specifically in the tail region defined by large $\|U\|$.  We then establish conditions for its identification under multivariate regular variation and propose two consistent estimators. We will make the following i.i.d. assumption. 
\begin{assumption}
  \label{assump:iid}
  The random variables $\{(X_i, D_i, Y_i, U_i)\}_{i = 1}^N$ are i.i.d.. Furthermore, $U$ is regularly varying and is independent of $X, D$. 
\end{assumption}

 We are interested in the effect of treatment on the tail events of $U$. Similar to ATE, a naive definition of the extreme treatment effect would be
\begin{align} \label{eq:ete}
  \theta^{\tmop{ETE}} & = \lim_{t \rightarrow \infty} \mathbb{E}[Y(1) - Y(0) \mid \|U\|> t],
\end{align}
which is simply ATE conditioning on large $ \|U\| $. However, in the case of extreme effects, the outcome may be unbounded due to the presence of extreme noise. As $t$ increases to infinity, this effect may increase to infinity, making this quantity meaningless. Considering the climate change example, it is possible that dramatic climate change will damage or even destroy human societies, causing the effects of some policies to explode even though the policy can effectively reduce losses and slow down the process. Fortunately, regularly varying distributions have the nice property that as $t$ increases to infinity, $\|U\|/ t \mid \|U\|> t$ converges weakly to the Pareto distribution (See \cref{def:mvarying}). Inspired by this property, we can normalize the quantity $ Y(1) - Y(0) \mid \|U\| > t $ by its growth rate. To characterize the growth of this quantity, we introduce the following polynomial growth assumption.
\begin{assumption}[Asymptotic Homogeneous Property]\label{assump:f_homo}
  We assume that the covariate $X$ is bounded, i.e. $\|X\| \leqslant R$. Let $f (X, D, U) =\mathbb{E}[Y \mid X, D, U]$. There exists a $L$-Lipschitz continuous function $g (x,
  d, u)$ and a function $e (t) : \mathbb{R}^+ \rightarrow \mathbb{R}^+$ that satisfies $\lim_{t \rightarrow \infty} e (t) = 0$ and
  \begin{align*}
    | \frac{f (x, d, tu)}{t^{\alpha}} - g (x, d, u) | & \leqslant e (t), \ \forall x \in B_R,\  u \in S^{d - 1}.
  \end{align*}
\end{assumption}
This assumption characterizes the growth of the outcome with respect to the extreme noise.  In many real-world examples, this assumption is satisfied. For instance, research show that landslide volume often follows a power-law relationship with rainfall intensity {\cite{tuganishuri2024prediction}}; the economic loss caused by hurricanes scales polynomially with the maximum wind speed {\cite{zhai2014dependence}}. In these cases, $f$ grows polynomially with respect to $\|U\|$ and $e(t) = 0$ exactly.  We define the Normalized Extreme Treatment Effect (NETE) as
\begin{align}\label{eq:nete}
  \theta^{\tmop{NETE}} & = \lim_{t \rightarrow \infty} \mathbb{E} \left[ \frac{Y(1) - Y(0)}{t^{\alpha}} \mid \|U\|> t \right],
\end{align}
where $\alpha$ is a known index in \cref{assump:f_homo} from prior knowledge. Note that the previous definition (\ref{eq:ete}) is a special case of (\ref{eq:nete}) when $\alpha = 0$. The intuition for the scaling factor $t^\alpha$ is that under \cref{assump:f_homo}, $\mathbb{E}[Y(d)] $ is of the order $ O(\| U \|^\alpha) $ and (\ref{eq:nete}) is of the order $ O(\mathbb{E}[(\|U\|/t)^\alpha \mid \|U\|>t]) $, which is finite if $\alpha < \beta$.  (\ref{eq:nete}) implies that for a large threshold $t$, we have $\mathbb{E}[ Y(1) - Y(0)] \approx t^\alpha \theta^{\tmop{NETE}} $. Therefore, $\theta^{\tmop{NETE}}$ measures the influence of treatment on the susceptibility of outcome with respect to extreme noise $U$. 

We want to remark that NETE naturally sits at the nexus of two well-studied strands of work, tail-conditional expectations in EVT, and average effects or distributional shifts at extreme quantiles, e.g., ATE, CATE and QTE. NETE can be understood as a causal analogue of EVT quantity $ \mathbb{E}[Z/t \mid Z > t] $, where $Z$ is a regularly varying variable. It generalizes ATE to the setting of extreme events and aligns with the growth rate given by EVT.  

\subsection{Extreme Effect Identification and Estimation}

The estimand (\ref{eq:nete}) is designed to measure the treatment effect under extreme events, i.e., extremely large $\|U\|$. In practice, there
may only be a small fraction of extreme samples in the dataset, which creates difficulties for statistical inference. To efficiently estimate the NETE, we leverage the asymptotic independence property of regularly varying variables
(See  Definition~\ref{def:mvarying}) to derive a novel identification formula.
In particular, we have the following decomposition.
\begin{align}
  \lim_{t \rightarrow \infty} \mathbb{E} \left[ \frac{Y (1) - Y
  (0)}{t^{\alpha}} \mid \|U\|> t \right] & = \lim_{t \rightarrow \infty}
  \mathbb{E} \left[ \frac{f (X, 1, U) - f (X, 0, U)}{t^{\alpha}} \mid \|U\|> t
  \right]  \label{eq:naive_id}\\
  & = \lim_{t \rightarrow \infty} \mathbb{E} \left[ \frac{f (X, 1, U) - f (X,
  0, U)}{\|U\|^{\alpha}} \cdot \left( \frac{\|U\|}{t} \right)^{\alpha} \mid
  \|U\|> t \right] \nonumber\\
  & = \lim_{t \rightarrow \infty} \mathbb{E} \left[ g (X, 1, U /\|U\|) - g
  (X, 0, U /\|U\|) \cdot \left( \frac{\|U\|}{t} \right)^{\alpha} \mid \|U\|> t
  \right]  \label{eq:naive_id2}
\end{align}

where we use Assumption~\ref{assump:f_homo} in the third equality. We can prove that the above quantity equals to
\[ \lim_{t \rightarrow \infty} \mathbb{E}[g (X, 1, U /\|U\|) - g (X, 0, U
   /\|U\|) \mid \|U\|> t] \cdot \lim_{t \rightarrow \infty}
   \mathbb{E}[\|U\|^{\alpha} / t^{\alpha} \mid \|U\|> t] . \]
The first factor measures the average effect of treatment across different
directions, while the second factor only depends on the norm of the extreme
noise, which can be estimated via standard techniques in extreme value theory.
We summarize the identification formula in the following proposition.

\begin{proposition}
  [Identification]\label{prop:id}Suppose that $U$
  is multivariate regularly varying and Assumption~\ref{assump:exo}, \ref{assump:iid}, \ref{assump:f_homo}
  hold, we have
  \begin{align*}
    \theta^{\tmop{NETE}} = \lim_{t \rightarrow \infty} \mathbb{E}[g (X, 1, U
    /\|U\|) - g (X, 0, U /\|U\|) \mid \|U\|> t] \cdot \lim_{t \rightarrow
    \infty} \mathbb{E}[\|U\|^{\alpha} / t^{\alpha} \mid \|U\|> t] .
  \end{align*}
\end{proposition}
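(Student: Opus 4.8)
The plan is to follow the three displayed lines that precede the statement and then supply the one genuinely nontrivial step, the factorization of the limit into a product of two limits. First I would justify the identification equality (\ref{eq:naive_id}). Since $\{\|U\|>t\}$ is $\sigma(U)$-measurable, the tower property conditioning on $(X,U)$ gives
\[ \mathbb{E}\!\left[\frac{Y(1)-Y(0)}{t^{\alpha}}\ \middle|\ \|U\|>t\right] = \mathbb{E}\!\left[\frac{\mathbb{E}[Y(1)\mid X,U]-\mathbb{E}[Y(0)\mid X,U]}{t^{\alpha}}\ \middle|\ \|U\|>t\right]. \]
To turn $\mathbb{E}[Y(d)\mid X,U]$ into $f(X,d,U)$, I would adjoin the observed, treatment-independent noise $U$ to the conditioning set in the ignorability condition: \cref{assump:exo} together with the independence $U\perp(X,D)$ of \cref{assump:iid} yields $(Y(1),Y(0))\perp D\mid(X,U)$, so that by consistency $\mathbb{E}[Y(d)\mid X,U]=\mathbb{E}[Y\mid X,D=d,U]=f(X,d,U)$.

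Next I would handle the algebraic rewriting and the homogeneous replacement. Multiplying and dividing by $\|U\|^{\alpha}$ is purely formal and produces the second displayed line. For the passage to (\ref{eq:naive_id2}), I write $U=\|U\|\,(U/\|U\|)$ and apply \cref{assump:f_homo} with $u=U/\|U\|\in S^{d-1}$ to obtain the pointwise bound $\bigl|f(X,d,U)/\|U\|^{\alpha}-g(X,d,U/\|U\|)\bigr|\le e(\|U\|)$. On the event $\{\|U\|>t\}$ this is at most $\bar e(t):=\sup_{s>t}e(s)$, and $\bar e(t)\to0$ because $e(s)\to0$. Writing $h(X,\omega):=g(X,1,\omega)-g(X,0,\omega)$ and $W_t:=(\|U\|/t)^{\alpha}$, the error incurred in replacing the $f$-difference by $h$ is bounded by $2\,\bar e(t)\,\mathbb{E}[W_t\mid\|U\|>t]$, and since the conditional $\alpha$-moment of $\|U\|/t$ stays bounded (established below), this error vanishes.

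The main obstacle is the factorization
\[ \lim_{t\to\infty}\mathbb{E}[h(X,U/\|U\|)\,W_t\mid\|U\|>t] = \lim_{t\to\infty}\mathbb{E}[h(X,U/\|U\|)\mid\|U\|>t]\cdot\lim_{t\to\infty}\mathbb{E}[W_t\mid\|U\|>t]. \]
My plan is to prove it by weak convergence plus uniform integrability. By \cref{def:mvarying}, conditional on $\{\|U\|>t\}$ the pair $(\|U\|/t,\ U/\|U\|)$ converges weakly to $(R,\Omega)$ with $R$ standard Pareto of index $\beta$ and $\Omega\sim S$ independent of $R$. Since $X\perp U$ and $\{\|U\|>t\}$ is $\sigma(U)$-measurable, $X$ retains its marginal law under the conditioning and is independent of $(R,\Omega)$ in the limit, so the triple $(X,\,U/\|U\|,\,\|U\|/t)\mid\{\|U\|>t\}$ converges weakly to a product law. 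Because $X$ ranges over the compact ball $B_R$ and $\omega$ over the sphere $S^{d-1}$, the $L$-Lipschitz $g$ is bounded there, hence $h$ is bounded and continuous, giving $\mathbb{E}[h\mid\|U\|>t]\to\mathbb{E}[h(X,\Omega)]$ directly.

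The delicate point is the unbounded radial factor $W_t=(\|U\|/t)^{\alpha}$, and this is exactly where the condition $\alpha<\beta$ (implicit in the finiteness of NETE) enters. Choosing $\delta>0$ small enough that $\alpha(1+\delta)<\beta$, Karamata/Potter bounds for regularly varying tails give a uniform-in-$t$ bound on $\mathbb{E}[W_t^{1+\delta}\mid\|U\|>t]$, which yields uniform integrability of $\{W_t\}$ and hence $\mathbb{E}[W_t\mid\|U\|>t]\to\mathbb{E}[R^{\alpha}]=\beta/(\beta-\alpha)$. Since $h$ is bounded and $\{W_t\}$ is uniformly integrable, the product $h\,W_t$ is uniformly integrable, so weak convergence upgrades to $\mathbb{E}[h(X,U/\|U\|)\,W_t\mid\|U\|>t]\to\mathbb{E}[h(X,\Omega)\,R^{\alpha}]$, and the limiting independence of $(X,\Omega)$ from $R$ factors this as $\mathbb{E}[h(X,\Omega)]\,\mathbb{E}[R^{\alpha}]$. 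Matching this against the product of the two separately computed limits closes the argument; the only real work is the uniform integrability of the radial factor under $\alpha<\beta$.
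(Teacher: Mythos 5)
Your argument follows essentially the same route as the paper's proof: replace $f(X,d,U)/\|U\|^{\alpha}$ by $g(X,d,U/\|U\|)$ at the cost of an $e$-type error, absorb that error using the boundedness of the conditional $\alpha$-moment of $\|U\|/t$ (Potter bounds), and pass to the limit via weak convergence of $(\|U\|/t,\,U/\|U\|)\mid \|U\|>t$ together with uniform integrability, factorizing through the asymptotic independence in Definition~\ref{def:mvarying}. On the extreme-value side your write-up is, if anything, more careful than the paper's: you correctly work with $\bar e(t)=\sup_{s>t}e(s)$ (the paper silently uses $e(t)$, which is legitimate only if $e$ is monotone), and your de la Vall\'ee Poussin argument---a uniform-in-$t$ bound on the conditional $(1+\delta)$-moment of $W_t$ with $\alpha(1+\delta)<\beta$---is a valid route to uniform integrability, whereas the paper's inference that uniform boundedness of $\mathbb{E}[|h(U)|\mid \|U\|>t]$ ``implies'' uniform integrability is not a correct implication as stated.

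The genuine flaw is in your justification of (\ref{eq:naive_id}). The implication you assert---that Assumption~\ref{assump:exo}, i.e.\ $(Y(1),Y(0))\perp D\mid X$, together with $U\perp(X,D)$ from Assumption~\ref{assump:iid} yields $(Y(1),Y(0))\perp D\mid(X,U)$---is false in general. Counterexample: let $X$ be degenerate, $D\sim\mathrm{Ber}(1/2)$ independent of a scalar Pareto variable $U$ with median $m$, and set $Y(1)=\mathbbm{1}\{U>m\}\oplus D$ (exclusive or) and $Y(0)=0$. Then $(Y(1),Y(0))\perp D$ and $U\perp(X,D)$, yet given $U=u$ the variable $Y(1)$ is a deterministic, nonconstant function of $D$; concretely, $\mathbb{E}[Y(1)-Y(0)\mid U>t]=1/2$ while $\mathbb{E}[f(X,1,U)-f(X,0,U)\mid U>t]=0$ for $t>m$, so the first identification equality itself fails (and one can check Assumption~\ref{assump:f_homo} holds here with $\alpha=0$, $g\equiv 0$, $e(t)=\mathbbm{1}\{t\le m\}$). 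What your argument actually needs is ignorability on the enlarged conditioning set, $(Y(1),Y(0))\perp D\mid(X,U)$, taken as a primitive assumption rather than derived. To be fair, the paper's own proof never addresses this step at all---it attributes the first equality to Assumption~\ref{assump:f_homo}---so the hole is inherited from the statement's assumptions rather than created by you; but your derivation does not close it, and you should state the strengthened exogeneity condition explicitly instead of claiming it follows from the assumptions as written.
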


Proposition \ref{prop:id} separates the estimation of NETE into two parts, the expectation of the spectral measure and the index estimation, which facilitates the estimation. While in theory naive identification (\ref{eq:naive_id}) works as well, we found that in practice (\ref{eq:naive_id}) performs poorly (See \cref{sec:exp} for empirical experiments). One reason is that without properly scaling, the (\ref{eq:naive_id}) suffers from exploding $\|U\|$, causing larger estimation errors. 


\begin{algorithm}[!tb]
  \caption{Algorithm for NETE Estimation}\label{alg:nete}
  \begin{algorithmic}[1]
    \REQUIRE Dataset $\mathcal{D}=\{(X_i,D_i,Y_i,U_i)\}_{i=1}^n$, threshold $t$, exponent estimation $ \hat{\alpha}_{n} $, estimator
    \STATE Randomly split $\mathcal{D}$ into two equal parts $\mathcal{D}_1$ and $\mathcal{D}_2$
    \STATE Using $\mathcal{D}_1$, estimate:
    \begin{enumerate}[label=\alph*.]
      \item Propensity function $\hat p(x)$ via regression of $D$ on $X$
      \item Pseudo‐outcome regression $\hat g(x,d,s)$ by regressing $Y/\|U\|^{\hat{\alpha}_{n}}$ on $(X,D,U/\|U\|)$
    \end{enumerate}
    \STATE Define index set $\mathcal{I}=\{i:\|U_i\|>t, (X_i,D_i,Y_i,U_i)\in\mathcal{D}_2\}$ and set $S_i=U_i/\|U_i\|$ for $i\in\mathcal{I}$
    
      \IF{estimator = IPW}
        \STATE Compute
        \begin{equation}
            \hat\eta_{n,t}^{\mathrm{IPW}}
          = \frac{1}{|\mathcal{I}|} \sum_{i\in\mathcal{I}} \frac{Y_i}{\|U_i\|^{\hat{\alpha}_n} } \Bigl(\frac{D_i}{\hat p(X_i)} - \frac{1-D_i}{1-\hat p(X_i)}\Bigr).\label{eq:eta_ipw}
        \end{equation}
      \ELSE
        \STATE Compute
        \begin{equation}
            \hat\eta_{n,t}^{\mathrm{DR}} = \frac{1}{|\mathcal{I}|} \sum_{i\in\mathcal{I}}\Bigl[\hat g(X_i,1,S_i)-\hat g(X_i,0,S_i)
          + \frac{D_i-\hat p(X_i)}{\hat p(X_i)(1-\hat p(X_i))}\bigl(Y_i/\|U_i\|^{\hat{\alpha}_{n}}-\hat g(X_i,D_i,S_i)\bigr)\Bigr]. \label{eq:eta_dr}
        \end{equation}
      \ENDIF
      \STATE Compute adaptive Hill estimator on $\{\|U_i\|:i\in\mathcal{I}\}$:
      \begin{equation}\label{eq:mu}
          \hat\gamma_n = \frac{1}{k} \sum_{j=1}^{k} \log\frac{\|U_{(j)}\|}{\|U_{(j+1)}\|}, \quad
        \hat\mu_n = \frac{1}{1-\hat{\alpha}_{n}\hat\gamma_n},
      \end{equation}
      where $\|U_{(1)}\|\ge\dots\ge\|U_{(k+1)}\|$ and $k$ is chosen by 
      \begin{equation}
           k  = \max \left\{ k \in \{l_n, \cdots, n\} \  \text{and} \  \forall i \in \{l_n,\cdots, n\}, | \hat{\gamma}_i - \hat{\gamma}_k | \leqslant \frac{\hat{\gamma}_i r_n (\delta)}{\sqrt{i}} \right\}, \notag
      \end{equation}
    \textbf{Return}:  $\widehat{\theta}_{n,t}^{\mathrm{estimator}} =  \hat\eta_{n,t}^{\mathrm{estimator}}\cdot\hat\mu_n.$
  \end{algorithmic}
\end{algorithm}

Inspired by this decomposition, we estimate the two factors separately. We summarize our estimators in Algorithm \ref{alg:nete}. To make our framework more flexible, we allow an approximate scaling exponential $\hat{\alpha}_n$ as input in \cref{alg:nete}. $\hat{\alpha}_n$ can be obtained from some prior knowledge or via other heuristics. For the first factor, we design two estimators, the Inverse Propensity Weighting (IPW) and the Doubly Robust (DR) estimators. To derive the estimators, we first randomly split the data into equal halves and use the first half for nuisance estimation, i.e., propensity and outcome. We use the first half of data to regress ($X, D, U /\|U\|$) on $Y/\|U\|^{\hat{\alpha}_n}$ to get (normalized) pseudo-outcome $\hat{g}$ and regress $X$ on $D$ to get an estimation of the propensity function $\hat{p}$. Then, we use
the second half for estimation. The IPW and DR estimators are defined in (\ref{eq:eta_ipw}) and (\ref{eq:eta_dr}), respectively.

Notice that the second factor is the $\alpha$-moment of the random variable $\|U\|/ t \mid \|U\|> t$, which converges weakly to a Pareto distribution as $t$ increases to infinity. Therefore, this quantity equals to the $\alpha$ moment of a standard Pareto $ 1/(1-\alpha \gamma) $ and the problem can be reduced to estimating the EVI of an asymptotic Pareto distribution. Here, we use the adaptive Hill estimator in (\ref{eq:mu}) from {\cite{boucheron2015tail}}, which provide a data-driven method for choosing the threshold.  Putting the two estimations together, we get our estimator of the NETE $\hat{\theta}_{n, t}^{\cdot} = \hat{\eta}_{n, t}^{\cdot}
\cdot \hat{\mu}_n$, where the superscript ${\cdot}$ can be DR or IPW.

\subsection{Non-asymptotic Analysis}

Up to now we have worked under very
mild regular variation and asymptotic homogeneity conditions, which suffice to prove the consistency of our two‐step estimator in the limit $n, t \rightarrow
\infty$. However, to obtain non--asymptotic, finite‐sample deviation bounds for both the spectral‐measure term and the tail‐index term, we must invoke a more structured tail model.  In particular, existing results such as those in {\cite{zhang2023wasserstein}} rely on the fact that, beyond regular variation, the noise vector behaves exactly like a (possibly linearly transformed) Pareto distribution. Although this is admittedly stronger than
mere second‐order regular variation, it is at present the only framework in which we can directly apply sharp concentration inequalities and Wasserstein‐distance bounds for spectral‐measure estimation. We therefore make the following Pareto‐type assumption.

\begin{assumption}
  \label{assump:mvary}We assume that the distribution of $U$ comes from the
  following class of models
  \begin{align*}
    M & = \cup_{k = 1}^{\infty} M_k,
  \end{align*}
  
  where \ $M_k =\{\mathcal{L}(X) : U = AZ$, for $A \in \mathcal{A}$ and
  $\nobracket \mathcal{L}(Z) \in \tilde{M}_k \}$. The set of possible
  distributions for the components $Z$ is
  \[ \tilde{M}_k = \left\{ \begin{array}{c}
       Z \ \text{admits a (Lebesgue) density } h (z)  \  \text{in } \mathbb{R}_+^{d_z} \\
       \left| \frac{h (z) - \beta^m  \prod_{i = 1}^m (1 +
       z_i)^{- (\beta + 1)}}{\beta^m  \prod_{i = 1}^{d_z} (1 + z_i)^{- (\beta +
       1)}} \right| \leqslant \xi k^{- s}, \forall z\\
       h (z) \varpropto \prod_{i = 1}^m (1 + z_i)^{- (\beta + 1)}  \text{if} \ \|z\|_1 > \zeta k^{\frac{1 - 2 s}{\beta}}
     \end{array} \right\}, \]
  and the set of possible matrices $\mathcal{A}$ is
  \[ \mathcal{A}= \left\{ A \in \mathbb{R}_+^{d_u \times d_z} : l \leq \min_i
     \|A_{\cdot i} \|_1 \leq \max_i \|A_{\cdot i} \|_1 \leq u \ \text{and} \  JA
     \geq \sigma \right\} ,\]
  where ${J}A = \sqrt{\det(A^\mathsf{T}A)}$. Throughout, we assume the constants satisfy $m \geq d \geq 2, 0 < l < 1 < u,
  0 < s < 1 / 2, \sigma > 0$, $0 < \xi < 1$, and $\zeta > 0$.
\end{assumption}

This assumption states that the extreme variable is a linear transformation of an approximate Pareto distribution. The parameter $s$ measures how close $Z$ is to a standard multivariate Pareto distribution. A small $s$ means the distribution is far from Pareto. With these assumptions, we are ready to state our main theorem, which give a
non-asymptotic rate to our estimand.

\begin{theorem}
  \label{thm:rate}Suppose that Assumption
  \ref{assump:exo}, \ref{assump:overlap}, \ref{assump:iid}, \ref{assump:f_homo}, \ref{assump:mvary} hold, $\alpha < \beta$, where $\alpha$
  and $\beta$ are defined in \cref{assump:f_homo} and \cref{assump:mvary}
  respectively. Furthermore, for any fixed $t$, with probability at least $1 -
  \delta$,
  \begin{align*}
    |p (X) - \hat{p} (X) | \leqslant R_p (n, \delta) & ,  | \hat{\alpha}_{n} - \alpha| \leqslant R_\alpha(n,\delta),\\
    |\mathbb{E}[Y/\|U\|^{\alpha} \mid X, D, U /\|U\|, \|U\|> t] &- \hat{g} (X, D, U /\|U\|)
    | \leqslant R_g (n_t, \delta),
  \end{align*}
  
  where $n_t = \sum_{i = 1}^{n/2} I (\|U_i \|> t)$ and $R_p, R_g, R_\alpha$ are estimation
  errors that are monotonically decreasing with respect to sample size. Then, with probability at
  least $1 - \delta, \delta \in (0, 1 / 2)$, we have
  \begin{align}
    \left| {{\widehat{\theta }_{n, t}^{\text{DR} }} }   -
    \theta^{NETE} \right|  & \leqslant O (\sqrt{R_p (n/2, \delta)R_g ({n_t}, \delta)} + t^{\beta / 2} n^{-1 / 2}+ \log (1 / \delta) n^{- 1 / (2 +\beta)}  \notag\\
    & \quad\quad \quad \quad + t^{- \min \{1, \beta\}}  + t^{- \beta s / (1 - 2 s)} + \log(t)R_\alpha(n,\delta)+ e (t)).\label{eq:rate_dr} 
  \end{align}
  and
  \begin{align}
    \left| {{\widehat{\theta }_{n, t}^{\text{IPW}}} }   - \theta^{NETE}
    \right| & \leqslant O (R_p (n/2, \delta)  +t^{\beta / 2} n^{- 1 / 2} + \log (1 / \delta)
    n^{- 1 / (2 + \beta)} \notag\\
    &\quad\quad \quad \quad  + t^{- \min \{1, \beta\}} + t^{- \beta s / (1 - 2 s)}+ \log(t)R_\alpha(n,\delta) + e (t)). \label{eq:rate_ipw}
  \end{align}
\end{theorem}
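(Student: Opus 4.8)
The plan is to exploit the product structure of the estimator, $\widehat{\theta}_{n,t} = \hat{\eta}_{n,t}\cdot\hat{\mu}_n$, together with the factorization $\theta^{\mathrm{NETE}} = \eta\cdot\mu$ from \cref{prop:id}, where $\eta = \lim_{t\to\infty}\mathbb{E}[g(X,1,U/\|U\|)-g(X,0,U/\|U\|)\mid \|U\|>t]$ and $\mu = \lim_{t\to\infty}\mathbb{E}[\|U\|^\alpha/t^\alpha\mid\|U\|>t] = 1/(1-\alpha\gamma)$. Telescoping gives
\[
  |\widehat{\theta}_{n,t}-\theta^{\mathrm{NETE}}| \le |\mu|\,|\hat{\eta}_{n,t}-\eta| + |\hat{\eta}_{n,t}|\,|\hat{\mu}_n-\mu|.
\]
First I would show both prefactors are $O(1)$ with high probability: $\mu$ is finite because $\alpha<\beta$ forces $\alpha\gamma<1$, and $|\hat{\eta}_{n,t}|$ is bounded using \cref{assump:overlap} (so the weights $1/\hat p,1/(1-\hat p)\le 1/c$), the $L$-Lipschitz continuity of $g$, the boundedness $\|X\|\le R$, and \cref{assump:f_homo} (which controls the normalized outcomes $Y/\|U\|^{\hat\alpha_n}$). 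The theorem's bound is then the sum of the two factor errors, which I treat separately.

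For the effect factor I would write $\hat{\eta}_{n,t}-\eta = (\hat{\eta}_{n,t}-\eta_t)+(\eta_t-\eta)$, where $\eta_t=\mathbb{E}[g(X,1,U/\|U\|)-g(X,0,U/\|U\|)\mid\|U\|>t]$ is the finite-threshold analogue computed with the true nuisances and true $\alpha$. The bias $\eta_t-\eta$ measures how fast the conditional law of $(X,U/\|U\|)$ given $\|U\|>t$ approaches its limiting product measure; since $g(X,1,\cdot)-g(X,0,\cdot)$ is bounded and Lipschitz, I would bound this by the Wasserstein/total-variation distance to the limit and invoke the Pareto-type model of \cref{assump:mvary} and multivariate regular variation, producing the bias terms $t^{-\min\{1,\beta\}}+t^{-\beta s/(1-2s)}$ (and $e(t)$, entering when $f(X,d,tu)/t^\alpha$ is replaced by $g$). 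The estimation error $\hat{\eta}_{n,t}-\eta_t$ I would control by cross-fitting: conditional on $\mathcal{D}_1$ the nuisances $\hat p,\hat g,\hat\alpha_n$ are frozen, and the remainder is an empirical average over the independent tail index set $\mathcal{I}$ with $|\mathcal{I}|=n_t \asymp n\,\mathbb{P}(\|U\|>t)\asymp n\,t^{-\beta}$. This term splits into (i) an exponent plug-in error from replacing $\alpha$ by $\hat\alpha_n$, handled via $|\|U\|^{-(\hat\alpha_n-\alpha)}-1|\le |\hat\alpha_n-\alpha|\log\|U\|$ and $\mathbb{E}[\log\|U\|\mid\|U\|>t]=O(\log t)$, giving $\log(t)R_\alpha$; (ii) a nuisance bias, where Neyman orthogonality of the DR score cancels the first-order dependence and leaves a second-order cross term controlled by the product of nuisance errors (reported as $\sqrt{R_pR_g}$), whereas the IPW score is only linear in the propensity error and contributes $O(R_p)$; and (iii) a Bernstein/Hoeffding deviation for the mean of the bounded summands over $n_t$ points, which yields $t^{\beta/2}n^{-1/2}$ since $1/\sqrt{n_t}\asymp t^{\beta/2}/\sqrt n$, after a separate concentration bound on $n_t=\sum_i I(\|U_i\|>t)$ to control its randomness.

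For the tail factor I would write $\hat{\mu}_n-\mu = (\hat\alpha_n\hat\gamma_n-\alpha\gamma)/[(1-\hat\alpha_n\hat\gamma_n)(1-\alpha\gamma)]$; after verifying that $\hat\alpha_n\hat\gamma_n$ stays bounded away from $1$ with high probability, this reduces to $O(|\hat\alpha_n-\alpha|+|\hat\gamma_n-\gamma|)$. The exponent error is $R_\alpha$, and the Hill error $|\hat\gamma_n-\gamma|$ I would bound using the adaptive-Hill guarantee of \cite{boucheron2015tail} on the tail sample together with \cref{assump:mvary}: the closeness-to-Pareto parameter $s$ governs the second-order bias and contributes $t^{-\beta s/(1-2s)}$, while the data-driven threshold selection produces the statistical fluctuation $\log(1/\delta)\,n^{-1/(2+\beta)}$. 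Collecting, $|\hat{\mu}_n-\mu|=O(R_\alpha+t^{-\beta s/(1-2s)}+\log(1/\delta)n^{-1/(2+\beta)})$, and summing the two factor bounds reproduces \eqref{eq:rate_dr} and \eqref{eq:rate_ipw}.

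The hardest part will be the estimation-error analysis of $\hat{\eta}_{n,t}$. The doubly-robust bias cancellation must be re-derived under the rare-event-conditioned measure $\mathbb{P}(\cdot\mid\|U\|>t)$ rather than the full population, and done so while simultaneously absorbing the unbounded plug-in factor $\log\|U\|$ coming from $\hat\alpha_n$ and the potential heavy-tailedness of the normalized summands $Y/\|U\|^{\hat\alpha_n}$, all over a random, shrinking subsample of size $n_t$. Coupling this finite-$t$, finite-$n$ fluctuation analysis with the regular-variation and Pareto-approximation bias rates—so that the two error sources combine additively and do not amplify one another through the prefactor $\mu$—is the most delicate bookkeeping in the argument.
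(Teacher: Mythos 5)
Your proposal follows essentially the same route as the paper: the same product decomposition $|\mu|\,|\hat{\eta}_{n,t}-\eta| + |\hat{\eta}_{n,t}|\,|\hat{\mu}_n-\mu|$, the same estimation--bias split of $\hat{\eta}_{n,t}-\eta$ via the finite-threshold quantity $\eta^t$ (Neyman-orthogonal/DML bounds for DR and a propensity-error bound for IPW, Bernstein concentration giving $n_t \asymp n t^{-\beta}$, Wasserstein-distance bias control under \cref{assump:mvary}, and the $\log(t)R_\alpha$ plug-in term), and the same adaptive-Hill analysis from \cite{boucheron2015tail} for $\hat{\mu}_n$. The only minor discrepancy is that you attribute a $t^{-\beta s/(1-2s)}$ second-order bias to the Hill estimator, whereas in the paper that term comes solely from the spectral-measure bias in $\eta^t-\eta$ (the paper's Lemma on $\hat{\gamma}_n$ gives a pure $n^{-1/(2+\beta)}$ rate after verifying the von Mises condition); since that term already appears in your $\eta$-bias, the collected bound is unaffected.
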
 

The error bound (\ref{eq:eta_dr}) consist of the nuisance error $ \sqrt{R_p (n/2, \delta)R_g ({n_t}, \delta)}$, variance $ t^{\beta / 2} n^{-1 / 2} $, EVI estimation error $ \log (1 / \delta) n^{- 1 / (2 +\beta)} $, $\alpha$ error $ R_\alpha (n,\delta)$ and bias terms $ t^{- \min \{1, \beta\}}  + t^{- \beta s / (1 - 2 s)} +  e (t) $. Similar pattern holds for (\ref{eq:rate_ipw}). Given this general result, we choose the threshold $t$ in a data-driven way to obtain a better rate. The idea is to use the estimated index to balance
the bias and variance terms in (\ref{eq:rate_dr}) and (\ref{eq:rate_ipw}). The following corollary gives
the convergence rate in two different regimes.

\begin{corollary}\label{cor:rate}
Under the assumptions of Theorem~\ref{thm:rate}, further
  suppose that
  \begin{align*}
    R_p (n, \delta) = \Theta (\log (1 / \delta) n^{- 1 / 2}) , R_g (n, \delta) = \Theta (\log (1 / \delta) n^{- 1 / 2}), R_\alpha (n, \delta) = \Theta( \log(1/\delta) n^{-c_\alpha}),
  \end{align*}
  for some $c_\alpha > 0 $, the following conclusions hold.
  \begin{enumerate}
    \item If $s \in (0, 1 / (2 + \max \{1, \beta\}))$, takes $t_n = \Theta (n^{
    (1 - 2 s) \hat{\gamma}_{n}})$, with probability at least $1 - \delta$,
    we have
    \begin{align*}
      | {{\widehat{\theta }_{n, t}^{\text{DR} }} }   - \theta^{NETE} | & = O
      (e (t_n) + n^{- s} \log (1 / \delta) +  n^{-c_\alpha} \log(n)\log (1 / \delta)) .
    \end{align*}
    \item If $s \in [1 / (2 + \max \{1, \beta\}), 1 / 2)$, \ takes $t = \Theta
    (n^{(\hat{\gamma}_{n} / (1 + 2 \min \{1, \hat{\gamma}_{n} \})})$,
    with probability at least $1 - \delta$, we have
    \begin{align*}
      | {{\widehat{\theta }_{n, t}^{\text{DR} }} }   - \theta^{NETE} | & = O
      (e (t_n) + n^{- 1 / (2 + \max \{\beta, 1\})} \log (1 / \delta) +  n^{-c_\alpha} \log(n)\log (1 / \delta)) .
    \end{align*}
  \end{enumerate}
\end{corollary}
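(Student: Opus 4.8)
The plan is to treat Corollary~\ref{cor:rate} as a deterministic optimization of the threshold $t$ inside the bound~(\ref{eq:rate_dr}) of Theorem~\ref{thm:rate}, after substituting the assumed nuisance rates. First I would plug $R_p(n/2,\delta)=\Theta(\log(1/\delta)n^{-1/2})$, $R_g(n_t,\delta)=\Theta(\log(1/\delta)n_t^{-1/2})$, and $R_\alpha(n,\delta)=\Theta(\log(1/\delta)n^{-c_\alpha})$ into~(\ref{eq:rate_dr}). The one non-trivial substitution is $R_g(n_t,\delta)$, since $n_t=\sum_{i=1}^{n/2}I(\|U_i\|>t)$ is random; here I would use that $U$ is regularly varying under Assumption~\ref{assump:mvary}, so $\mathbb{P}(\|U\|>t)=\Theta(t^{-\beta})$, and a Bernstein/Chernoff bound to show $n_t=\Theta(nt^{-\beta})$ with probability at least $1-\delta/3$ whenever $nt^{-\beta}\to\infty$ (which holds for both prescribed choices of $t$). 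This yields $R_g(n_t,\delta)=\Theta(\log(1/\delta)n^{-1/2}t^{\beta/2})$ and hence a nuisance term $\sqrt{R_pR_g}=\Theta(\log(1/\delta)n^{-1/2}t^{\beta/4})$, reducing the whole bound to an explicit function of $t$.

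Next I would carry out the bias--variance balance. The only terms varying with $t$ in a competing way are the variance term $t^{\beta/2}n^{-1/2}$ (increasing) and the two bias terms $t^{-\min\{1,\beta\}}$ and $t^{-\beta s/(1-2s)}$ (decreasing). Comparing exponents, one checks that $\beta s/(1-2s)<\min\{1,\beta\}$ precisely when $s<1/(2+\max\{1,\beta\})$, so the crossover of the dominant bias is exactly the boundary between the two regimes. In Regime~1 the slower-decaying bias is $t^{-\beta s/(1-2s)}$; balancing it against $t^{\beta/2}n^{-1/2}$ gives $t_n=\Theta(n^{(1-2s)/\beta})=\Theta(n^{(1-2s)\gamma})$ and common value $n^{-s}$. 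In Regime~2 the dominant bias is $t^{-\min\{1,\beta\}}$; balancing it against the variance gives $t=\Theta(n^{1/(\beta+2\min\{1,\beta\})})=\Theta(n^{\gamma/(1+2\min\{1,\gamma\})})$ and common value $n^{-1/(2+\max\{\beta,1\})}$, matching the stated thresholds and leading rates (using $\gamma=1/\beta$).

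It then remains to check that every other term is dominated by the leading rate in each regime. Using the regime constraint on $s$, I would verify that the off-balance bias term, the EVI error $\log(1/\delta)n^{-1/(2+\beta)}$, and the nuisance term $\log(1/\delta)n^{-1/2}t^{\beta/4}$ all decay at least as fast as the leading term; for instance, in Regime~1 the nuisance exponent $(1+2s)/4\ge s$ whenever $s\le 1/2$, and $1/(2+\beta)\ge s$ by the regime bound, while in Regime~2 the analogous comparisons reduce to $\beta\ge 0$. The $\alpha$-error $\log(t)R_\alpha(n,\delta)$ then contributes the separate $n^{-c_\alpha}\log(n)\log(1/\delta)$ term, since $\log t=\Theta(\log n)$ under both choices of $t$, and $e(t_n)$ is carried through unchanged. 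Collecting these yields the two stated bounds after a union bound that folds the failure probabilities of the nuisance estimates, the Hill estimator, and the $n_t$ concentration into a single $\delta$.

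The main obstacle, and the step I would treat most carefully, is that the prescribed threshold $t_n=\Theta(n^{(1-2s)\hat\gamma_n})$ is \emph{random}, depending on the estimated index $\hat\gamma_n$ rather than on $\gamma=1/\beta$. Writing $t_n=n^{(1-2s)\gamma}\cdot n^{(1-2s)(\hat\gamma_n-\gamma)}$, the discrepancy factor equals $\exp\bigl((1-2s)(\hat\gamma_n-\gamma)\log n\bigr)$, so I need $(\hat\gamma_n-\gamma)\log n\to 0$. This holds because the adaptive Hill error is polynomial (of order $n^{-1/(2+\beta)}$ up to log factors), hence $(\hat\gamma_n-\gamma)\log n=o(1)$ and the factor $n^{\pm o(1)}$ is sub-polynomial and absorbed into the $O(\cdot)$ and the $\log$ prefactors. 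The delicate point is making this rigorous uniformly on the high-probability event while simultaneously keeping the $n_t$ concentration and the Theorem~\ref{thm:rate} guarantees valid on that same event, so that the plug-in threshold does not degrade the polynomial rate.
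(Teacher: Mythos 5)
Your proposal is correct and follows essentially the same route as the paper's proof: plug the assumed rates into the bound of Theorem~\ref{thm:rate} (with $n_t=\Theta(nt^{-\beta})$ from the Bernstein step already embedded in Lemma~\ref{lemma:eta_rate}), split regimes at $s=1/(2+\max\{1,\beta\})$ according to which bias term dominates, balance it against the variance term $t^{\beta/2}n^{-1/2}$, and absorb the randomness of $t_n$ by noting that $|\hat\gamma_n-\gamma|=O(\log(1/\delta)n^{-1/(2+\beta)})$ makes the factor $n^{(\hat\gamma_n-\gamma)\beta}$ equal to $1+o(1)$. The only cosmetic difference is that the paper bounds the nuisance term $t^{\beta/4}n^{-1/2}$ by $t^{\beta/2}n^{-1/2}$ and merges it into the variance term, whereas you check its domination separately; both are valid.
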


Similar results hold for the IPW estimator. Due to limited space, we leave the result for IPW in the appendix. Many common machine learning algorithms, e.g., Lasso, logistic regression, neural networks, can achieve $O(n^{-1/2})$ rate in the assumption of \cref{cor:rate}.  We want to highlight that if $e(t)$ decays fast enough and become negligible compared to the other term and we know the correct scaling exponential $\alpha$, Corollary~\ref{cor:rate} matches the rate of {\cite[Theorem 3.1]{zhang2023wasserstein}} without prior knowledge on the index $\beta$ in the Assumption~\ref{assump:mvary}. Besides, if we have additional prior knowledge on $e(t)$ and $c_\alpha$, we can adjust the choice of threshold $t$ to achieve a better rate. 

\begin{remark}
  When the extreme noise is 1-dimensional, the spectral measure is trivially $\delta_{\{1\}}$ and there is no need to estimate the spectral measure. Following a similar argument of  Theorem~\ref{thm:rate} and 
  Corollary~\ref{cor:rate}, we can obtain a convergence rate of $O (e
  (t_n) + \log(1/\delta)n^{- 1 / (2 + \beta)} + \log(1/\delta) n^{-c_\alpha} )$.
\end{remark}


\begin{remark}
  Assumption \ref{assump:mvary} may seem restricted at first glance. This assumption is used here because the non-asymptotic result for regularly varying extreme distributions is rare in the literature and the goal of this paper is not to develop a new estimator for the spectral measure. To the best of our knowledge, {\cite{zhang2023wasserstein}} is the only paper that gives such a result under Assumption \ref{assump:mvary}. In fact, Assumption \ref{assump:mvary} can be easily replaced by the following two assumptions in our proof. (1) The extreme noise $U$ is regularly varying and its norm $\|U\|$ satisfies the von Mises condition in {\cite{boucheron2015tail}}. (2) There exists an upper bound for the bias term $ \| \mathbb{E} [f(U/\|U\|) \mid \|U\|>t] - \lim_{t\rightarrow\infty}\mathbb{E} [f(U/\|U\|) \mid \|U\|>t]\leqslant O(t^{- c_0})$, for some constant $c_0 > 0$ for a fixed Lipschitz function $f$. We leave this generalization to future work.
\end{remark}

\section{Experiments}\label{sec:exp}
Having established in Section \ref{sec:main} that under our regularity and overlap assumptions the DR‐ and IPW‐based extreme treatment estimators enjoy a provable non‑asymptotic error bound, we next evaluate their finite‑sample behavior and compare with our estimators with naive estimators that does not consider the regularly varying structure. In what follows, Section \ref{subsec:syn} presents purely synthetic simulations with known NETE. Section \ref{subsec:semisyn} then moves to a semi‑synthetic setting—using real noise from wavesurge datasets—to assess practical performance under realistic complexities.

\subsection{Synthetic Dataset}\label{subsec:syn}
The data generation process we use in this subsection is 
\begin{align*}
    &X \sim \text{Unif}([0,1]^5), D \sim \text{Ber}(p(X)), \text{where} \  p(x) = 1/(1 + \exp(-x^\mathsf{T} b)), \\
    & Y = \| U \|^\alpha(D + U/\|U\| + \epsilon) + \| U \|^{\alpha/2}, \epsilon\sim\text{Unif}(-1,1), 
\end{align*}
where $\alpha > 0$ is a constant and $b\sim N(0,1), A \sim \text{Unif}([1,2]^{d_u\times d_z})$. We consider two ways of generating the extreme noise. The first one follows \cref{assump:mvary}. 
\begin{align*}
     Z = (Z_1,\cdots, \ Z_{d_z}), \ Z_i \sim \text{Pareto}(\beta),\ U = AZ, A \in \mathbb{R}^{d_u\times d_z}.
\end{align*}
We also consider a Pareto mixture, i.e., $ U = (U_1,\cdots, U_{d_u}), U_i \sim 0.5 \text{Pareto} (\beta) + 0.5 \text{Pareto}(\beta + 1) $.  Note that \cref{assump:f_homo} is satisfied with $ e(t) = t^{-\alpha/2} $. By \cref{prop:id}, we can calculate the ground-truth effect. The graph below shows the Mean Square Error (MSE) with our estimator using different sample sizes. We take different values for $\alpha, \beta$ in the experiments. In this case, by \cref{prop:id}, we know that the ground-truth NETE is $ {1}/{(1-\alpha/\beta)} $. We use Mean-Square-Error (MSE), $\mathbb{E}[(\hat{\theta} - \theta^{\text{NETE}})^2]$, to measure the error. As a baseline, we compare our estimator with naive IPW and DR estimators. Naïve‑IPW simply applies the standard IPW estimator to the $U_i$ that has norm larger than a threshold $t$, ignoring any tail‐index modeling. Similarly, Naïve‑DR augments it with the usual doubly‑robust correction term but likewise ignores the Pareto structure. We leave the detailed math formulation of the baseline estimators to the appendix. The thresholds rule in \cref{cor:rate} is used in the experiments and we use the same threshold selection rules for all estimators. We estimate the scaling exponential $\alpha$ by doing linear regression $ \log (|Y|) \sim \log (\|U\|)$ and use the coefficient of $\log(\|U\|)$ as $\hat{\alpha}_n$. We leave the experiment details to the appendix. \cref{fig:synthetic} and \cref{fig:synthetic2} show the experiment results. In the following, we use EVT-IPW and EVT-DR to represent $ \widehat{\theta}_{n,t}^{\mathrm{IPW}} $ and  $ \widehat{\theta}_{n,t}^{\mathrm{DR}} $ in \cref{alg:nete}.

\begin{figure}[!htbp]
  \centering
  \begin{minipage}[b]{0.45\textwidth}
    \centering
    \includegraphics[width=\linewidth]{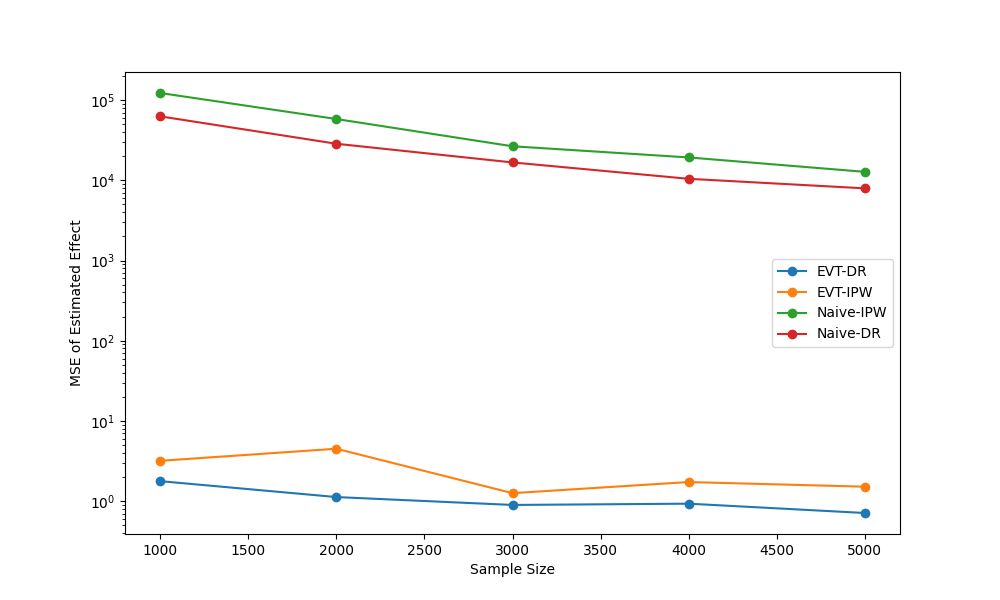}
  \end{minipage}
  \hfill
  \begin{minipage}[b]{0.45\textwidth}
    \centering
    \includegraphics[width=\linewidth]{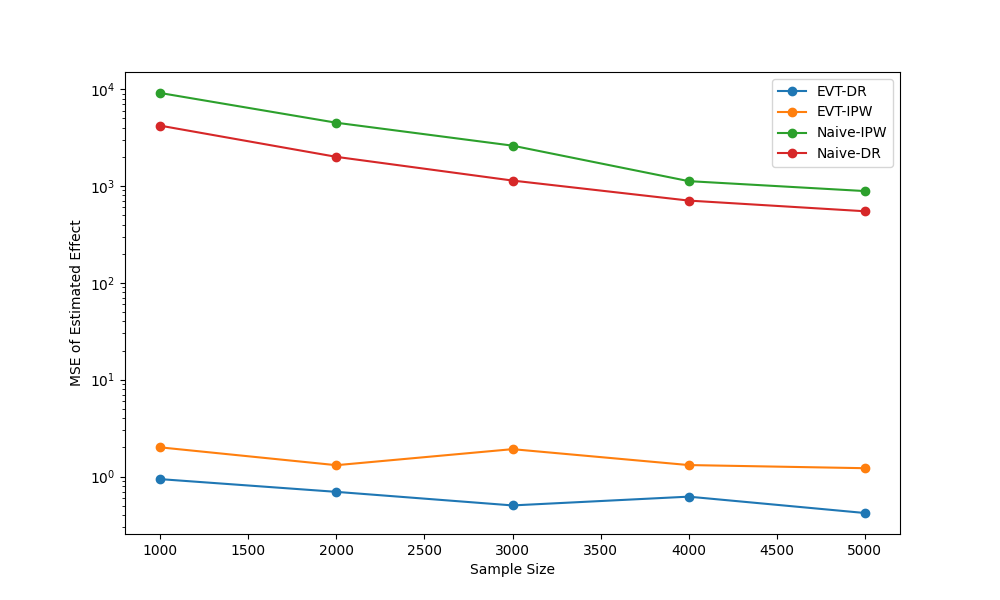}
  \end{minipage}
  \vspace{1em} 
  \begin{minipage}[b]{0.45\textwidth}
    \centering
    \includegraphics[width=\linewidth]{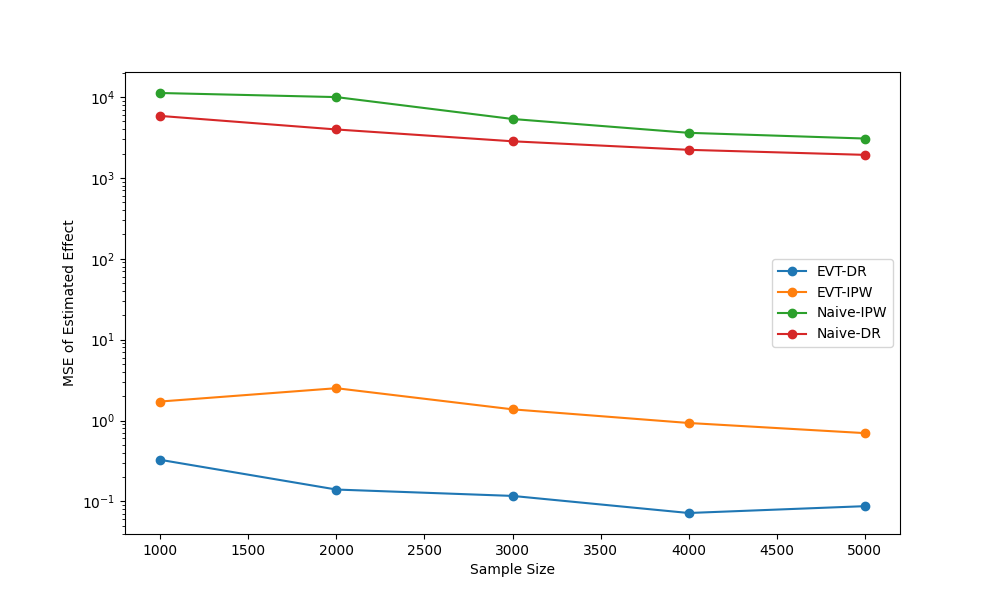}
  \end{minipage}
  \hfill
  \begin{minipage}[b]{0.45\textwidth}
    \centering
    \includegraphics[width=\linewidth]{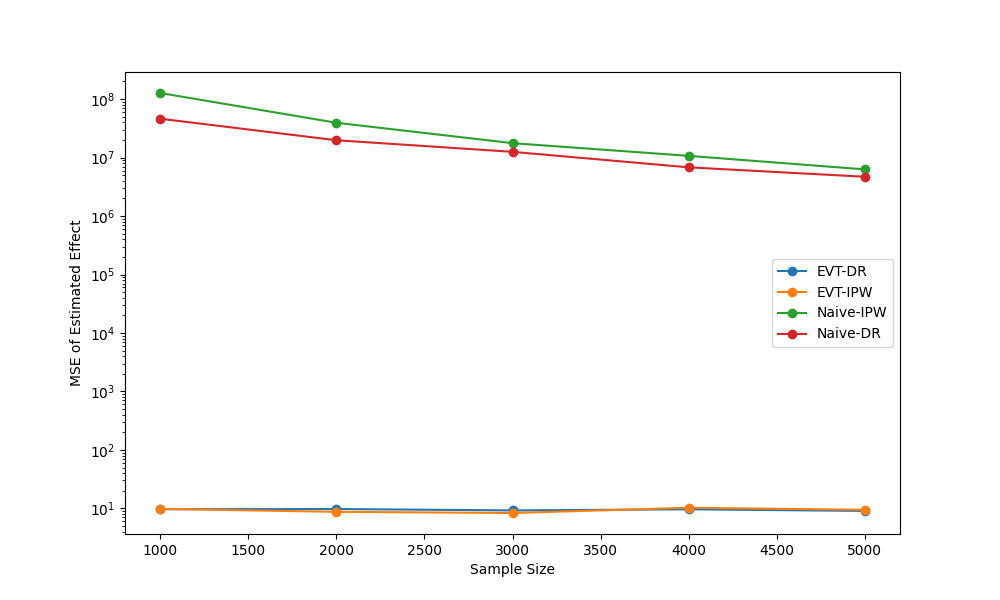}
  \end{minipage}

  \caption{Experiment results of four different configurations when the extreme noise is a linear transformation of Pareto variables. The configures of upper left, upper right, lower left and lower right are $ \alpha, \beta, d_z, d_u = (1, 1.5, 50, 10), (1, 1.5, 30, 5), (1, 2.5, 30, 5)$ and $ (2, 2.5, 30, 5)$ respectively. The results are averages of 50 repeated experiments. We use EVT-IPW and EVT-DR to represent $ \widehat{\theta}_{n,t}^{\mathrm{IPW}} $ and  $ \widehat{\theta}_{n,t}^{\mathrm{DR}} $ in \cref{alg:nete}.}
  \label{fig:synthetic}
\end{figure}

 Figure \ref{fig:synthetic} and \cref{fig:synthetic2} show that under different configurations of $\alpha,\beta, d_u, d_z$, our estimators generally perform better than the baseline estimators. The reason is that our estimators can make better use of the regularly varying structure. In general, EVT-DR achieves the smallest MSE in most experiments and is robust under different configurations. Note that the Pareto mixture does not satisfy \cref{assump:mvary}. \cref{fig:synthetic2} shows that our method still maintain a good performance even if \cref{assump:mvary} is violated. We also observe that sometimes the MSE increases with more samples in \cref{fig:synthetic2}. An explanation for this is that violation of \cref{assump:mvary} causes the threshold selection rule in \cref{cor:rate} not to be applicable and the variance term dominates the error.  
 

\begin{figure}[!htbp]
  \centering
  \begin{minipage}[b]{0.45\textwidth}
    \centering
    \includegraphics[width=\linewidth]{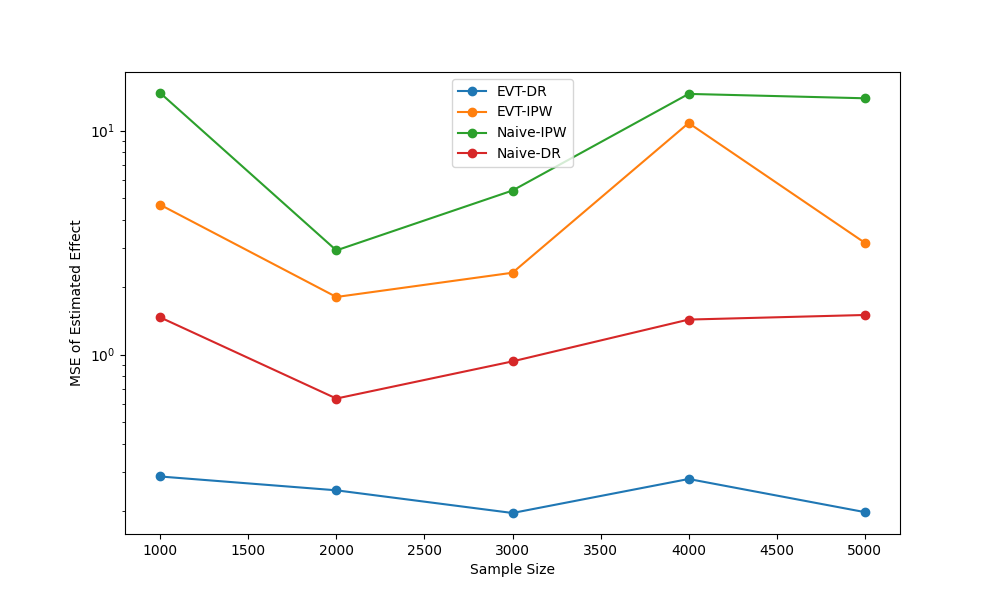}
  \end{minipage}
  \hfill
  \begin{minipage}[b]{0.45\textwidth}
    \centering
    \includegraphics[width=\linewidth]{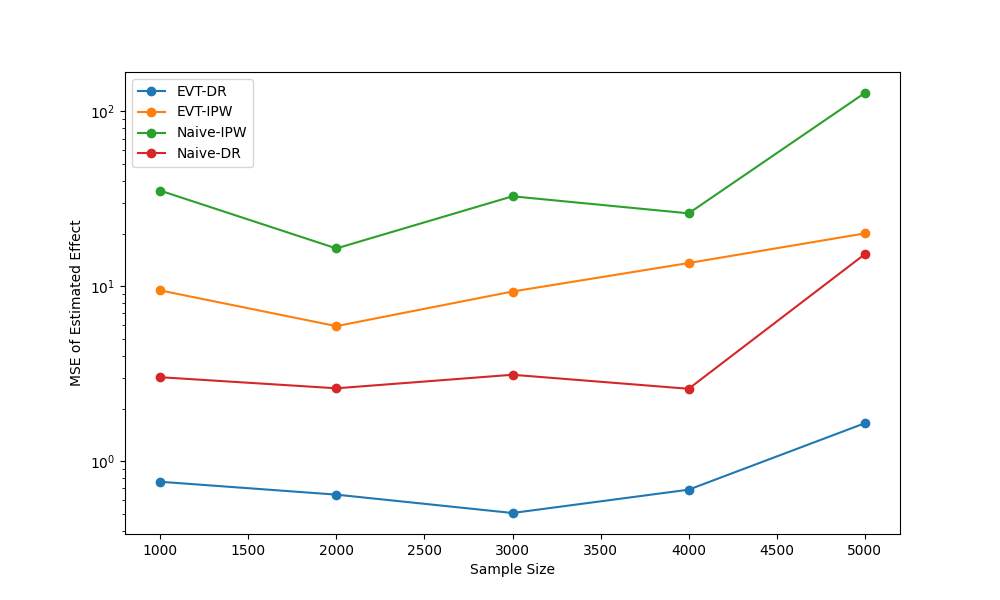}
  \end{minipage}
  \vspace{1em} 
  \begin{minipage}[b]{0.45\textwidth}
    \centering
    \includegraphics[width=\linewidth]{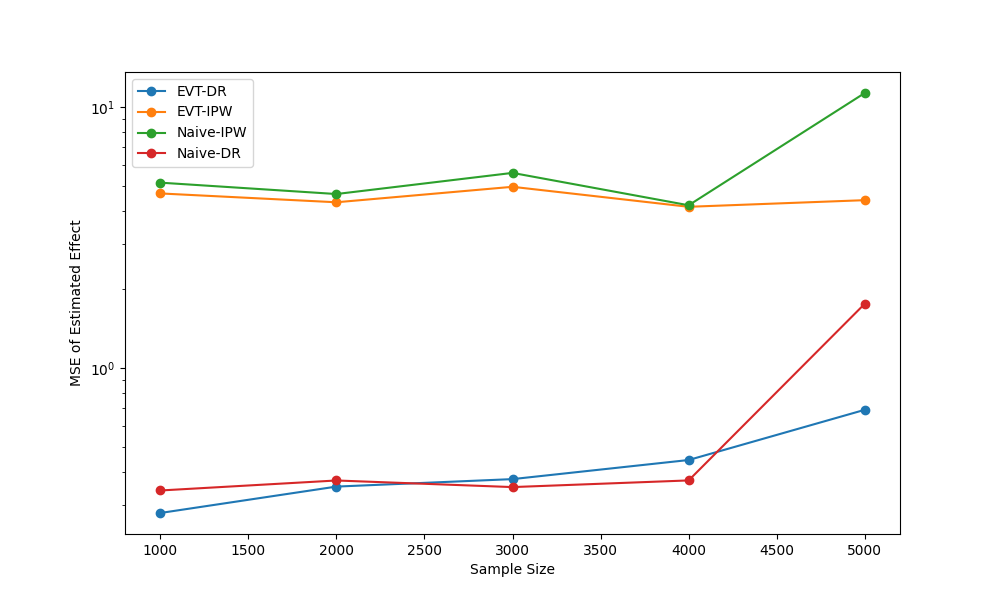}
  \end{minipage}
  \hfill
  \begin{minipage}[b]{0.45\textwidth}
    \centering
    \includegraphics[width=\linewidth]{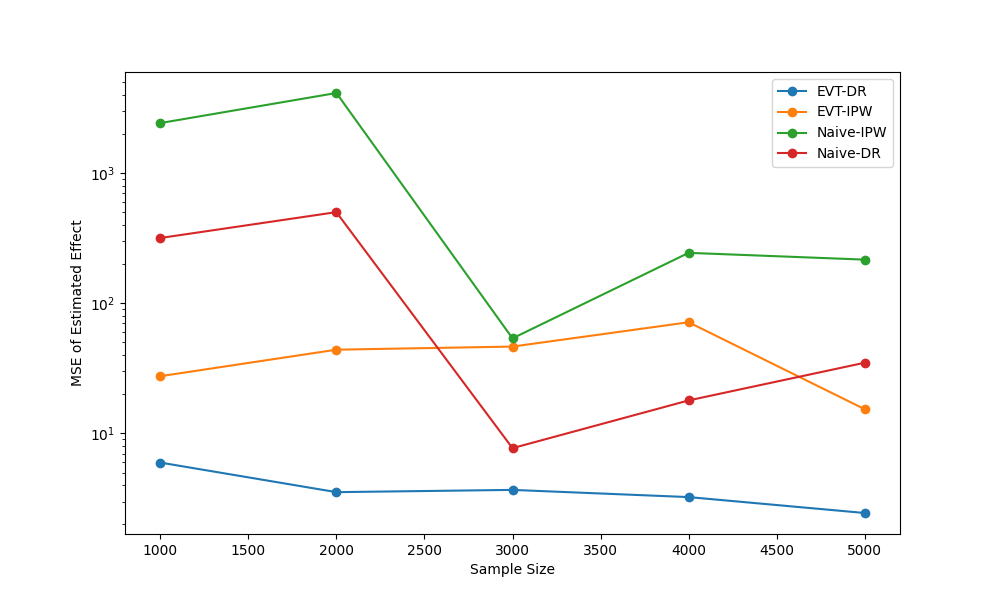}
  \end{minipage}

  \caption{Experiment results of four different configurations when the extreme noise is a Pareto mixture. The configures of upper left, upper right, lower left and lower right are $ \alpha, \beta, d_u = (1, 1.5, 10), (1, 1.5, 5), (1, 2.5,  5)$ and $ (2, 2.5, 5)$ respectively. The results are averages of 50 repeated experiments.}
  \label{fig:synthetic2}
\end{figure}

\subsection{Semi-synthetic Dataset}\label{subsec:semisyn}

Now, we use the wavesurge dataset \cite{coles2001introduction} to create a semi-synthetic dataset for our experiments. The wavesurge dataset has 2894 data points, which contain wave and surge heights at a single location off south-west England. Since wave and surge heights are not in the same scale and may not be positive, we shift the data and normalize each dimension by its $ 10 \% $ quantile. Given the wavesurget dataset, we generate our semi-synthetic dataset in the following way. 
\begin{align}
    & X \sim \text{Unif}(0,1), D \sim \text{Ber}(p(X)), \text{where} \  p(x) = 1/(1 + \exp(-x^\mathsf{T} b)), \notag\\
    & Y = (1 -  X + D)W^{\alpha_1} S^{\alpha_2} +  N(0,1), \label{eq:semi_syn}
\end{align}
where $ W $ and $ S $ are the height of the wave and surge, respectively. In this experiment, we evaluate how well our proposed EVT‑based estimators recover the Normalized Extreme Treatment Effect (NETE) when only limited “short‑term” data are available. We split the dataset into a training set (1,000 observations) and a test set (1,894 observations). First, we estimate the NETE on the training set using four estimators. Next, we apply the identification formula from Proposition \ref{prop:id} together with (\ref{eq:semi_syn}) to obtain a high‑fidelity estimate of the NETE on the test set. Because the test‑set estimate leverages additional data and the correct tail model, we treat it as a surrogate “ground truth” for comparison.  The real-world implications of this experiment is that we can use some short-term data (the training set) to predict long-term and unobserved behavior (the test set). 

\cref{tab:semi-synthetic} shows the results we get using different estimators. The results show that our EVT-IPW and EVT-DR give estimations that are closer to the test-set estimate than the naive estimators. In particular, the naive estimators consistently overshoot the true NETE by an order of magnitude. In addition, while more extreme tail configurations (e.g.\ $(1,3)$) slightly increase variance, the EVT‑based methods remain stable, with EVT‑DR deviating by at most 0.3 from the test‑set estimate.  These findings demonstrate that incorporating multivariate extreme value structure via our EVT‑IPW and EVT‑DR estimators substantially improves finite‑sample estimation of treatment effects on rare, tail events, compared both to naive methods. 

\begin{table}[!ht]
  \centering
  \caption{Causal Effect Estimates}
  \label{tab:semi-synthetic}
  \begin{tabular}{@{}cccccc@{}}
    \toprule
    $(\alpha_1, \alpha_2) $ & EVT-DR & EVT-IPW & Naive-DR  & Naive-IPW&  Test‐set Estimate \\
    \midrule
     (2, 2) & 0.18 &  0.25& 41.93  & 27.34  & 0.13 \\ 
     (1, 3) & 0.43 & 0.44 & 17.04 & 15.68  &0.13 \\
     (2.5, 1) & 0.13 & 0.18  & 31.64 & 26.06 & 0.20\\
     (1.5, 1.5) & 0.26 & 0.23 & 7.91 & 9.46  & 0.20\\
    \bottomrule
  \end{tabular}
\end{table}
 
\section{Conclusion}
In this paper, we addressed the challenge of estimating treatment effects on rare, high‐impact events by uniting tools from causal inference and extreme value theory. We introduced a new estimand that explicitly captures how an intervention shifts the tail average of the outcome distribution. Exploiting the spectral–magnitude decomposition inherent in multivariate regular variation, we obtained a simple, implementable identification formula (Proposition \ref{prop:id}). Building on this, we constructed both inverse‐propensity‐weighted (IPW) and doubly‐robust (DR) estimators, and we further established non‐asymptotic error bounds under a Pareto‐type tail assumption (\cref{thm:rate}, \cref{cor:rate}). In simulations and real‐data experiments, our methods consistently outperformed naive estimators when targeting extreme outcomes, thereby validating our theoretical guarantees. We believe this work opens the door to more refined causal analyses in applications—such as disaster risk reduction and financial risk management—where understanding treatment effects in the tail is paramount. One limitation of this work is that we mainly use heuristics to estimate the scaling exponential $\alpha $, which lacks a theoretical guarantee. For future direction, we would like to explore how to estimate the exponential $\alpha$ and develop a more adaptive method for choosing thresholds for our estimators. 

\bibliography{reference}

\begin{thebibliography}{30}
\providecommand{\natexlab}[1]{#1}
\providecommand{\url}[1]{\texttt{#1}}
\expandafter\ifx\csname urlstyle\endcsname\relax
  \providecommand{\doi}[1]{doi: #1}\else
  \providecommand{\doi}{doi: \begingroup \urlstyle{rm}\Url}\fi

\bibitem[Avella-Medina et~al.(2022)Avella-Medina, Davis, and Samorodnitsky]{avella2022kernel}
Marco Avella-Medina, Richard~A Davis, and Gennady Samorodnitsky.
\newblock Kernel pca for multivariate extremes.
\newblock \emph{arXiv preprint arXiv:2211.13172}, 2022.

\bibitem[Bang and Robins(2005)]{bang2005doubly}
Heejung Bang and James~M. Robins.
\newblock Doubly robust estimation in missing data and causal inference models.
\newblock \emph{Biometrics}, 61\penalty0 (4):\penalty0 962--973, 2005.
\newblock \doi{10.1111/j.1541-0420.2005.00377.x}.

\bibitem[Bingham et~al.(1989)Bingham, Goldie, and Teugels]{bingham1989regular}
Nicholas~H Bingham, Charles~M Goldie, and Jef~L Teugels.
\newblock \emph{Regular variation}, volume~27.
\newblock Cambridge university press, 1989.

\bibitem[Bodik et~al.(2023)Bodik, Pawlas, and Palu{\v s}]{bodikCausalityExtremesTime2023}
Juraj Bodik, Zbyn{\v e}k Pawlas, and Milan Palu{\v s}.
\newblock Causality in extremes of time series, August 2023.

\bibitem[Boucheron and Thomas(2015)]{boucheron2015tail}
St{\'e}phane Boucheron and Maud Thomas.
\newblock Tail index estimation, concentration and adaptivity.
\newblock 2015.

\bibitem[Chernozhukov and Du(2006)]{chernozhukovExtremalQuantilesValueRisk2006}
Victor Chernozhukov and Songzi Du.
\newblock Extremal {{Quantiles}} and {{Value-at-Risk}}, May 2006.

\bibitem[Chernozhukov and Fern{\'a}ndez-Val(2011)]{chernozhukov2011inference}
Victor Chernozhukov and Iv{\'a}n Fern{\'a}ndez-Val.
\newblock Inference for extremal conditional quantile models, with an application to market and birthweight risks.
\newblock \emph{The Review of Economic Studies}, 78\penalty0 (2):\penalty0 559--589, 2011.

\bibitem[Chernozhukov et~al.(2016)Chernozhukov, Chetverikov, Demirer, Duflo, Hansen, Newey, and Robins]{chernozhukov2016double}
Victor Chernozhukov, Denis Chetverikov, Mert Demirer, Esther Duflo, Christian Hansen, Whitney Newey, and James Robins.
\newblock Double/debiased machine learning for treatment and causal parameters.
\newblock \emph{arXiv preprint arXiv:1608.00060}, 2016.

\bibitem[Chernozhukov et~al.(2017)Chernozhukov, Chetverikov, Demirer, Duflo, Hansen, Newey, and Robins]{chernozhukov2017double}
Victor Chernozhukov, Denis Chetverikov, Mert Demirer, Esther Duflo, Christian Hansen, Whitney Newey, and James Robins.
\newblock Double/debiased machine learning for treatment and structural parameters.
\newblock Technical Report 23564, National Bureau of Economic Research, 2017.

\bibitem[Chernozhukov et~al.(2018)Chernozhukov, Chetverikov, Demirer, Duflo, Hansen, Newey, and Robins]{chernozhukov2018double}
Victor Chernozhukov, Denis Chetverikov, Mert Demirer, Esther Duflo, Christian Hansen, Whitney Newey, and James Robins.
\newblock Double/debiased machine learning for treatment and structural parameters.
\newblock \emph{Econometrics Journal}, 21\penalty0 (1):\penalty0 C1--C68, 2018.

\bibitem[Coles et~al.(2001)Coles, Bawa, Trenner, and Dorazio]{coles2001introduction}
Stuart Coles, Joanna Bawa, Lesley Trenner, and Pat Dorazio.
\newblock \emph{An introduction to statistical modeling of extreme values}, volume 208.
\newblock Springer, 2001.

\bibitem[Davison and Smith(1990)]{davisonModelsExceedancesHigh1990}
Anthony~C. Davison and Richard~L. Smith.
\newblock Models for exceedances over high thresholds.
\newblock \emph{Journal of the Royal Statistical Society Series B: Statistical Methodology}, 52\penalty0 (3):\penalty0 393--425, 1990.

\bibitem[Deuber et~al.(2024)Deuber, Li, Engelke, and Maathuis]{deuberEstimationInferenceExtremal2024}
David Deuber, Jinzhou Li, Sebastian Engelke, and Marloes~H. Maathuis.
\newblock Estimation and {{Inference}} of {{Extremal Quantile Treatment Effects}} for {{Heavy-Tailed Distributions}}.
\newblock \emph{Journal of the American Statistical Association}, 119\penalty0 (547):\penalty0 2206--2216, July 2024.
\newblock ISSN 0162-1459.
\newblock \doi{10.1080/01621459.2023.2252141}.

\bibitem[Foster and Syrgkanis(2023)]{foster2023orthogonal}
Dylan~J Foster and Vasilis Syrgkanis.
\newblock Orthogonal statistical learning.
\newblock \emph{The Annals of Statistics}, 51\penalty0 (3):\penalty0 879--908, 2023.

\bibitem[Gissibl and Kl{\"u}ppelberg(2018)]{NadineMaxLinear}
Nadine Gissibl and Claudia Kl{\"u}ppelberg.
\newblock {Max-linear models on directed acyclic graphs}.
\newblock \emph{Bernoulli}, 24\penalty0 (4A):\penalty0 2693 -- 2720, 2018.
\newblock \doi{10.3150/17-BEJ941}.
\newblock URL \url{https://doi.org/10.3150/17-BEJ941}.

\bibitem[Gnecco et~al.(2021)Gnecco, Meinshausen, Peters, and Engelke]{gnecco2021causal}
Nicola Gnecco, Nicolai Meinshausen, Jonas Peters, and Sebastian Engelke.
\newblock Causal discovery in heavy-tailed models.
\newblock \emph{The Annals of Statistics}, 49\penalty0 (3):\penalty0 1755--1778, 2021.

\bibitem[Kang and Schafer(2007)]{kang2007demystifying}
Joseph~D.Y. Kang and Joseph~L. Schafer.
\newblock Demystifying double robustness: A comparison of alternative strategies for estimating a population mean from incomplete data.
\newblock \emph{Statistical Science}, 22\penalty0 (4):\penalty0 523--539, 2007.
\newblock \doi{10.1214/07-STS227}.

\bibitem[Leadbetter(1991)]{leadbetterBasisPeaksThreshold1991}
M.~R. Leadbetter.
\newblock On a basis for `{{Peaks}} over {{Threshold}}' modeling.
\newblock \emph{Statistics \& Probability Letters}, 12\penalty0 (4):\penalty0 357--362, October 1991.
\newblock ISSN 0167-7152.
\newblock \doi{10.1016/0167-7152(91)90107-3}.

\bibitem[Mhalla et~al.(2020)Mhalla, Chavez-Demoulin, and Dupuis]{mhalla2020causal}
Linda Mhalla, Val{\'e}rie Chavez-Demoulin, and Debbie~J Dupuis.
\newblock Causal mechanism of extreme river discharges in the upper danube basin network.
\newblock \emph{Journal of the Royal Statistical Society Series C: Applied Statistics}, 69\penalty0 (4):\penalty0 741--764, 2020.

\bibitem[Nguyen(2014)]{nguyen2014tail}
Quang~Huy Nguyen.
\newblock \emph{Tail distribution of the sums of regularly varying random variables, computations and simulations}.
\newblock PhD thesis, Lyon 1, 2014.

\bibitem[Pickands~III(1975)]{pickandsiiiStatisticalInferenceUsing1975}
James Pickands~III.
\newblock Statistical inference using extreme order statistics.
\newblock \emph{the Annals of Statistics}, pages 119--131, 1975.

\bibitem[Rosenbaum and Rubin(1983)]{rosenbaum1983central}
Paul~R. Rosenbaum and Donald~B. Rubin.
\newblock The central role of the propensity score in observational studies for causal effects.
\newblock \emph{Biometrika}, 70\penalty0 (1):\penalty0 41--55, 1983.
\newblock \doi{10.1093/biomet/70.1.41}.

\bibitem[Rubin(1974)]{rubin1974estimating}
Donald~B. Rubin.
\newblock Estimating causal effects of treatments in randomized and nonrandomized studies.
\newblock \emph{Journal of Educational Psychology}, 66\penalty0 (5):\penalty0 688--701, 1974.
\newblock \doi{10.1037/h0037350}.

\bibitem[Smith(1989)]{smithExtremeValueAnalysis1989}
Richard~L. Smith.
\newblock Extreme value analysis of environmental time series: An application to trend detection in ground-level ozone.
\newblock \emph{Statistical Science}, pages 367--377, 1989.

\bibitem[Su et~al.(2023)Su, Mou, Ding, and Wainwright]{su2023estimated}
Fangzhou Su, Wenlong Mou, Peng Ding, and Martin~J Wainwright.
\newblock When is the estimated propensity score better? high-dimensional analysis and bias correction.
\newblock \emph{arXiv preprint arXiv:2303.17102}, 2023.

\bibitem[Tuganishuri et~al.(2024)Tuganishuri, Yune, Adhikari, Lee, Kim, and Yum]{tuganishuri2024prediction}
J{\'e}r{\'e}mie Tuganishuri, Chan-Young Yune, Manik~Das Adhikari, Seung~Woo Lee, Gihong Kim, and Sang-Guk Yum.
\newblock Prediction of volume of shallow landslides due to rainfall using data-driven models.
\newblock \emph{Natural Hazards and Earth System Sciences Discussions}, 2024:\penalty0 1--26, 2024.

\bibitem[van~der Laan and Rubin(2006)]{vanderlaan2006targeted}
Mark~J. van~der Laan and Daniel Rubin.
\newblock Targeted maximum likelihood learning.
\newblock \emph{International Journal of Biostatistics}, 2\penalty0 (1), 2006.
\newblock \doi{10.2202/1557-4679.1043}.

\bibitem[Zhai and Jiang(2014)]{zhai2014dependence}
Alice~R Zhai and Jonathan~H Jiang.
\newblock Dependence of us hurricane economic loss on maximum wind speed and storm size.
\newblock \emph{Environmental Research Letters}, 9\penalty0 (6):\penalty0 064019, 2014.

\bibitem[Zhang et~al.(2023)Zhang, Blanchet, Marzouk, Nguyen, and Wang]{zhang2023wasserstein}
Xuhui Zhang, Jose Blanchet, Youssef Marzouk, Viet~Anh Nguyen, and Sven Wang.
\newblock Wasserstein-based minimax estimation of dependence in multivariate regularly varying extremes.
\newblock \emph{arXiv preprint arXiv:2312.09862}, 2023.

\bibitem[Zhang(2018)]{zhangExtremalQuantileTreatment2018}
Yichong Zhang.
\newblock Extremal quantile treatment effects.
\newblock \emph{The Annals of Statistics}, 46\penalty0 (6B):\penalty0 3707--3740, December 2018.
\newblock ISSN 0090-5364, 2168-8966.
\newblock \doi{10.1214/17-AOS1673}.

\end{thebibliography}
\bibliographystyle{plainnat}

\newpage

\appendix

\section{Proofs}
\subsection{Identification Formula}
In this subsection, we derive the identification formula in Proposition
\ref{prop:id}.

\begin{proof}
We first prove (\ref{eq:naive_id2}). By Assumption \ref{assump:f_homo},
\begin{align*}
  \lim_{t \rightarrow \infty} \mathbb{E} &\left[ \frac{Y (1) - Y
  (0)}{t^{\alpha}} \mid \|U\|> t \right] \\
  & = \lim_{t \rightarrow \infty}
  \mathbb{E} \left[ \frac{f (X, 1, U) - f (X, 0, U)}{t^{\alpha}} \mid \|U\|> t
  \right] \\
  & = \lim_{t \rightarrow \infty} \mathbb{E} \left[ \frac{f (X, 1, U) - f (X,
  0, U)}{\|U\|^{\alpha}} \cdot \left( \frac{\|U\|}{t} \right)^{\alpha} \mid
  \|U\|> t \right]\\
  & = \lim_{t \rightarrow \infty} \mathbb{E} \left[ (g (X, 1, U / \| U \|) -
  g (X, 0, U / \| U \|) + 2 e (t)) \cdot \left( \frac{\|U\|}{t}
  \right)^{\alpha} \mid \|U\|> t \right]
\end{align*}

We next argue that $\lim_{t \rightarrow \infty} \mathbb{E} [(\| U \| /
t)^{\alpha} \mid \| U \| > t] = \alpha / (\beta - \alpha)$. We have
\begin{align*}
  \mathbb{E} [(\| U \| / t)^{\alpha} \mid \| U \| > t] & = 1 + \int^{\infty}_1
  P ((\| U \| / t)^{\alpha} \geqslant r \mid \| U \| > t) \tmop{dr}\\
  & = 1 + \frac{\int_1^{\infty} P (\| U \| / t > r^{1 / \alpha}) \tmop{dr}}{P
  (\| U \| > t)}\\
  & = 1 + \frac{\int_1^{\infty} \alpha P (\| U \| > {rt}) r^{\alpha - 1}
  \tmop{dr}}{P (\| U \| > t)}
\end{align*}

Note that $\| U \|$ is also regularly varying. By Potter's theorem \cite[Theorem 1.56]{bingham1989regular}, for any $\epsilon >0$ and sufficiently large $t$, we have
\begin{align*}
  \frac{P (\| U \| > {rt})}{P (\| U \| > t)} & \leqslant 2 r^{-\beta + \epsilon} .
\end{align*}

Take $\epsilon > 0$ such that $\alpha -\beta + \epsilon - 1 < - 1$, we have for sufficiently large $t$, 
\begin{align*}
  \frac{\int_1^{\infty} \alpha P (\| U \| > {rt}) r^{\alpha - 1}
  \tmop{dr}}{P (\| U \| > t)} & \leqslant 2\int^{\infty}_1 \alpha r^{\alpha -\beta + \epsilon - 1} \tmop{dr} < \infty .
\end{align*}

Therefore, by the dominance
convergence theorem,
\begin{align*}
  \mathbb{E} [(\| U \| / t)^{\alpha} \mid \| U \| > t] & \rightarrow 1 +
  \int_1^{\infty} \alpha r^{- \beta} \cdot r^{\alpha - 1} \tmop{dr} = \beta /
  (\beta - \alpha),
\end{align*}

which implies
\begin{align*}
  \lim_{t \rightarrow \infty} e (t) \mathbb{E} [(\| U \| / t)^{\alpha} \mid \|
  U \| > t] & = 0
\end{align*}

and (\ref{eq:naive_id2}) holds. We then verify the uniform integrability of
function
\[ h (U) =\mathbb{E}_X [(g (X, 1, U / \| U \|) - g (X, 0, U / \| U \|)) (\| U
   \| / t)^{\alpha}] . \]
Note that by Assumption \ref{assump:f_homo}, $g$ is a continuous function on a
compact set and thus is bounded by some constant $C > 0$. We have
\begin{align*}
  \mathbb{E} [| h (U) | \mid \| U \| > t] & \leqslant 2 C\mathbb{E} [(\|
  U \| / t)^{\alpha} \mid \| U \| > t] .
\end{align*}

We have proven that the Right Hand Side (RHS) converges to a constant as $t
\rightarrow \infty$, which implies that $\mathbb{E} [| h (U) | \mid \| U \| >
t]$ is uniformly bounded. We conclude that $h (U)$ is uniformly integrable. By the uniform‐integrability convergence theorem,
\begin{align*}
  \lim_{t \rightarrow \infty} \mathbb{E} &\left[ (g (X, 1, U / \| U \|) - g (X,
  0, U / \| U \|)) \cdot \left( \frac{\|U\|}{t} \right)^{\alpha} \mid \|U\|> t
  \right] \\
  & =\mathbb{E}_{(r, \theta) \sim \mathcal{L}} [(g (X, 1, \theta) - g
  (X, 0, \theta)) r^{\alpha}]\\
  & =\mathbb{E}_{\theta \sim \mathcal{L}} [(g (X, 1, \theta) - g (X, 0, \theta))] \mathbb{E}_{r \sim \mathcal{L}}
  [r^{\alpha}] .
\end{align*}

where $\mathcal{L} $ is the limiting distribution of $(\| U \| / t, U / \| U
\|) \mid \| U \| > t$ and we use the asymptotic independent property of regularly varying distributions (Definition \ref{def:mvarying}).
\end{proof}

\subsection{Non-asymptotic Analysis}

To obtain a convergence rate for the estimator $\hat{\theta}_n^t$, we first analyze the rate of the two factors $\hat{\gamma}_n$ and $\hat{\eta}_{n,t}^{\text{DR}}$.

\begin{lemma}
  \label{lemma:mu_rate}Under Assumption \ref{assump:iid}, \ref{assump:mvary}
  with probability at least $1 - \delta$, for sufficiently large $n$, we have
  \begin{align*}
    | \gamma - \hat{\gamma}_n | & \leqslant O \left( \left( \frac{\log (2 /
    \delta)}{n} \right)^{1 / (2 + \beta)} \right),
  \end{align*}
  
  where $\gamma = 1 / \beta$ is the EVI of $U$. 
\end{lemma}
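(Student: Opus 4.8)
The plan is to reduce the claim to a one-dimensional tail-index problem for the scalar $\|U\|$ and then invoke the non-asymptotic guarantee for the adaptive Hill estimator from \cite{boucheron2015tail}. The first step is to identify the tail of $\|U\|$ explicitly. Since we work with the $\ell_1$ norm and $A \in \mathbb{R}_+^{d_u \times d_z}$, $Z \in \mathbb{R}_+^{d_z}$ are nonnegative, $\|U\|_1 = \|AZ\|_1 = \sum_i \sum_j A_{ij} Z_j = \sum_j \|A_{\cdot j}\|_1 Z_j = \sum_j a_j Z_j$, where $a_j := \|A_{\cdot j}\|_1 \in [l,u]$ by Assumption \ref{assump:mvary}. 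Thus $\|U\|$ is a positive weighted sum of components that are (in the relevant tail region) independent Pareto-type variables of index $\beta$. By the single-big-jump principle for sums of regularly varying summands, $P(\|U\| > x) \sim x^{-\beta}\sum_j a_j^\beta$, so $\|U\|$ is regularly varying of index $\beta$ with EVI $\gamma = 1/\beta$.

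The central step is to quantify the second-order tail behaviour, since it governs the bias of the Hill estimator. For a Pareto (type II) component with survival $(1+z)^{-\beta}$ one has $P(a_j Z_j > x) = a_j^\beta x^{-\beta}(1 - \beta a_j / x + o(1/x))$, and summing over $j$ gives $P(\|U\|>x) = x^{-\beta}\bigl(\sum_j a_j^\beta\bigr)(1 + O(x^{-1}))$; the relative second-order correction decays like $x^{-1}$. Translating to the tail quantile function, this is a second-order (von Mises) condition with second-order parameter $\rho = -\gamma = -1/\beta$ and auxiliary function $A(t)$ regularly varying of index $-1/\beta$. I would verify that this is exactly the von Mises-type hypothesis required by \cite{boucheron2015tail}, taking care that the largest order statistics entering the Hill sum lie in the region $\|z\|_1 > \zeta k^{(1-2s)/\beta}$ of Assumption \ref{assump:mvary}, where the density is \emph{exactly} the product Pareto law; there the $\xi k^{-s}$ approximation error vanishes and the joint law is exactly independent Pareto, so neither independence nor the leading second-order term is corrupted.

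The final step applies the adaptive Hill concentration bound of \cite{boucheron2015tail}. The data-driven rule in (\ref{eq:mu}) is precisely their Lepski-type selection, with $r_n(\delta)$ setting the confidence width and $l_n$ the lower end of the search range. A sub-Gamma concentration bound for the log-spacings controls the stochastic fluctuation by $\gamma/\sqrt{k}$, introducing the $\log(2/\delta)$ factor, while the bias is of order $|A(n/k)| \sim (n/k)^{-1/\beta}$. Balancing $k^{-1/2}$ against $(n/k)^{-1/\beta}$ yields $k \asymp n^{2/(2+\beta)}$ and hence the rate $\bigl(\log(2/\delta)/n\bigr)^{1/(2+\beta)}$, which in the $-\rho/(1-2\rho)$ form equals $(1/\beta)/(1+2/\beta) = 1/(2+\beta)$; the adaptive rule attains this without prior knowledge of $\rho$ or $\beta$.

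I expect the main obstacle to be the second step: rigorously establishing the second-order von Mises condition for the heavy-tailed sum $\sum_j a_j Z_j$, since the tail of a sum of regularly varying summands requires controlling contributions beyond the single-big-jump term, and one must reconcile the $k$-dependent approximation of Assumption \ref{assump:mvary} with the fixed-distribution, growing-$n$ regime of the lemma. Once the second-order parameter $\rho = -1/\beta$ and its auxiliary function are pinned down and matched to the hypotheses of \cite{boucheron2015tail}, the conclusion follows by direct citation of their adaptive Hill guarantee.
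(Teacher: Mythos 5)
Your overall route is the same as the paper's: reduce the problem to the scalar $\|U\|_1=\sum_j\|A_{\cdot j}\|_1 Z_j$, verify a second-order/von Mises condition for its tail, and then invoke the adaptive (Lepski-type) Hill bound of \cite{boucheron2015tail}, whose bias--variance balance $k^{-1/2}\asymp (n/k)^{-1/\beta}$ gives $k\asymp n^{2/(2+\beta)}$ and the rate $n^{-1/(2+\beta)}$. The reduction to a weighted sum, the restriction to the exact-Pareto tail region of Assumption \ref{assump:mvary}, and the final citation all match what the paper does.

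The genuine gap is exactly the step you yourself flag. Your expansion $P(\|U\|>x)=x^{-\beta}\bigl(\sum_j a_j^\beta\bigr)\bigl(1+O(x^{-1})\bigr)$ is obtained by summing the marginal second-order expansions of $P(a_jZ_j>x)$, and that operation is only valid at first order: the single-big-jump principle gives $P\bigl(\sum_j a_jZ_j>x\bigr)\sim\sum_j P(a_jZ_j>x)$, but the second-order correction to the tail of a sum is \emph{not} the sum of the marginal corrections. It contains cross-terms (two summands simultaneously large) and, when $\beta>1$, mean-shift contributions of order $\mu f(x)$. By classical second-order subexponentiality results of Omey--Willekens type, the cross-terms contribute at relative order $x^{-\beta}$, so for $\beta\le 1$ (which the lemma permits, since $\beta\in(0,\infty)$) the relative correction is $\Theta(x^{-\min\{1,\beta\}})$ rather than $O(x^{-1})$; your claimed second-order parameter $\rho=-\gamma$, and hence the exponent $1/(2+\beta)$, therefore does not follow from your derivation, and in part of the parameter range its conclusion is not even of the right order. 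The paper never attempts this summation: it works at the density level, citing \cite{nguyen2014tail} (Theorem 2.1) for an expansion $f(t)=C\beta t^{-\beta-1}\bigl(1+D(1-1/\beta)t^{-1}+o(t^{-1})\bigr)$ of the density of the weighted sum in the exact-Pareto region, from which it gets $\bar F$ and the tail quantile function $U$, and then computes the von Mises function $\eta(t)=1/\bigl(tU(t)f(U(t))\bigr)-1/\beta=O(t^{-1/\beta})$ explicitly before invoking \cite{boucheron2015tail}. So what your proposal is missing is precisely this external input (or an honest proof of it): the second-order behaviour of the tail of the weighted Pareto sum, which is the entire technical content of the paper's proof of this lemma.
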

\begin{proof}
  We adopt the non-asymptotic analysis of the adaptive Hill estimator for EVI
  in {\citep{boucheron2015tail}}. In the paper, the author adopts an adaptive
  estimator, choosing $k$ to be
  \begin{align*}
    k & = \max \left\{ k \in \{l_n, \cdots, n\} \text{and} \  \forall i \in
    \{l_n, \cdots, n\}, | \hat{\gamma}(i) - \hat{\gamma}(k) | \leqslant
    \frac{\hat{\gamma}(i) r_n (\delta)}{\sqrt{i}} \right\}
  \end{align*}
  where $\hat{\gamma}(i) = \frac{1}{i} \sum_{j=1}^i \log \frac{\|U_{(j)}\|}{\|U_{(i+1)}\|}$ and $r_n (\delta)$ scales like $\sqrt{\log ((2 / \delta) \log (n))}$.
  First, we verify the von Mises conditions in \cite{boucheron2015tail} under Assumption \ref{assump:mvary}. Let $F$ be the CDF of $\|U\|$. By
  Assumption~\ref{assump:mvary}, we know that
  \begin{align*}
    g (z) & = c \alpha^m  \prod_{i = 1}^m (1 + z_i)^{- \alpha - 1}, \|z\|_1 >
    \zeta k^{(1 - 2 s) / \alpha} .
  \end{align*}
  
  Note that
  \begin{align*}
    |c - 1| = \left| \frac{g (z) - \alpha^m  \prod_{i = 1}^m (1 + z_i)^{-
    \alpha - 1}}{\alpha^m  \prod_{i = 1}^m (1 + z_i)^{- \alpha - 1}} \right| &
    \leqslant \xi k^{- s} \leqslant \xi .
  \end{align*}
  
  Let $\tilde{Z}_1, \cdots, \tilde{Z}_m \sim \alpha c^{1 / m} (1 + z)^{-
  \alpha - 1}, z \geqslant c^{1 / (m \alpha)} - 1$. Then, we verify the upper bound for the von Mises function, i.e., $\sup_{s\geqslant t} |\eta (s)| \leqslant O( t^\rho)$ for some $\rho < 0$, where $\eta$ is the von Mises function.
  \begin{align}
    \eta (t) & = \frac{tU' (t)}{U (t)} - \frac{1}{\beta} \\
    & = \frac{1}{tU (t) f (U (t))} - \frac{1}{\beta},  \label{eq:vonmise_fun}
  \end{align}
  
  where $f (t)$ is the density function of $\sum_{i = 1}^m a_i \tilde{z}_i$. By
  {\citetext{\citealp{nguyen2014tail},  Theorem 2.1}}, we have that when
  $\|U\|_1 > \max_i \{a_i \} \zeta k^{(1 - 2 s) / \alpha}$,
  \begin{align}
    f (t) & = C \beta t^{- \beta - 1} (1 + D (1 - 1 / \beta) t^{- 1} + o (t^{-
    1})) .  \label{eq:density_f}
  \end{align}
  
  Then,
  \begin{align*}
    \bar{F} (t) = 1 - F(t) & = Ct^{- \beta} (1 + Dt^{- 1} + o (t^{- 1}))
  \end{align*}
  
  and
  \begin{align}
    U (t) & = C^{1 / \beta} t^{1 / \beta} (1 + DC^{- 1 / \beta} t^{- 1 /
    \beta} / \beta + o (t^{- 1 / \beta})) .  \label{eq:u_fun}
  \end{align}
  
  Plug in (\ref{eq:density_f}) and (\ref{eq:u_fun}) into
  (\ref{eq:vonmise_fun}), we get
  \begin{align*}
    \eta (t) & = \frac{1}{\beta (1 - DC^{- 1 / \beta} t^{- 1 / \beta} + o
    (t^{1 / \beta}))} - \frac{1}{\beta} = O (t^{- 1 / \beta}) .
  \end{align*}
  
  Therefore, the growth rate of the von Mises function is bounded. By
  {\citep{boucheron2015tail}}, with probability at least $1 - \delta$, we have
  \begin{align*}
    | \gamma - \hat{\gamma}_n | & \leqslant O \left( \left( \frac{\log (2 /
    \delta)}{n} \right)^{1 / (2 + \beta)} \right),
  \end{align*}
\end{proof}

\begin{lemma}
  \label{lemma:eta_rate}Undet the assumption of Theorem \ref{thm:rate}, with
  probability at least $1 - \delta$, we have
  \begin{align*}
    | \hat{\eta}_{n, t}^{\tmop{DR}} - \eta | & \leqslant O (\sqrt{R_p (n/2, \delta) R_g
    (t^{- \beta} n, \delta)} + t^{\beta / 2} n^{- 1 / 2} + t^{- \min \{1,
    \beta\}} + t^{- \beta s / (1 - 2 s)} + e (t) + \log (t) R_{\alpha} (n,\delta)) .
  \end{align*}
\end{lemma}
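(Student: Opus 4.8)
The target of this lemma is the directional factor $\eta := \lim_{t\to\infty}\mathbb{E}[g(X,1,S)-g(X,0,S)\mid\|U\|>t]$ with $S=U/\|U\|$, i.e.\ the first factor in \cref{prop:id}. The plan is to run a cross-fitting / doubly robust argument adapted to the tail-conditioned subsample $\mathcal{I}=\{i\in\mathcal{D}_2:\|U_i\|>t\}$ and to the fact that the estimator uses the estimated exponent $\hat\alpha_n$ and the regression $\hat g$ of the pseudo-outcome $Y/\|U\|^{\hat\alpha_n}$. First I would condition on the first fold $\mathcal{D}_1$, freezing $\hat p,\hat g,\hat\alpha_n$, so that the summands of $\hat\eta_{n,t}^{\mathrm{DR}}$ indexed by $\mathcal{I}$ become i.i.d.\ draws from the tail law $(\cdot\mid\|U\|>t)$. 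Writing the score $\psi(Z;p,g,a)=g(X,1,S)-g(X,0,S)+\frac{D-p(X)}{p(X)(1-p(X))}\big(Y/\|U\|^{a}-g(X,D,S)\big)$, its tail-population value $\Psi_t(p,g,a)=\mathbb{E}[\psi(Z;p,g,a)\mid\|U\|>t]$, the true tail-conditional pseudo-outcome regression $\bar g(X,D,S)=\mathbb{E}[Y/\|U\|^{\alpha}\mid X,D,S,\|U\|>t]$ (the target of $\hat g$), and $\eta_t=\mathbb{E}[g(X,1,S)-g(X,0,S)\mid\|U\|>t]$, I would telescope $\hat\eta_{n,t}^{\mathrm{DR}}-\eta$ through the chain (i) $\hat\eta_{n,t}^{\mathrm{DR}}-\Psi_t(\hat p,\hat g,\hat\alpha_n)$ (statistical), (ii) $\Psi_t(\hat p,\hat g,\hat\alpha_n)-\Psi_t(\hat p,\hat g,\alpha)$ (exponent substitution), (iii) $\Psi_t(\hat p,\hat g,\alpha)-\Psi_t(p,\bar g,\alpha)$ (Neyman orthogonality), (iv) $\Psi_t(p,\bar g,\alpha)-\eta_t$ (homogeneity), and (v) $\eta_t-\eta$ (spectral-measure convergence).

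For term (i), conditional on $\mathcal{D}_1$ the summand has bounded second moment: $g$ is bounded since it is continuous on the compact set $B_R\times\{0,1\}\times S^{d-1}$ (\cref{assump:f_homo}), the inverse-propensity weights are bounded by \cref{assump:overlap}, and the residual $Y/\|U\|^{\hat\alpha_n}-\hat g$ has a finite tail second moment because $\hat\alpha_n$ lies within $R_\alpha$ of $\alpha<\beta$, so the scale factor $\|U\|^{\alpha-\hat\alpha_n}$ stays within a constant of $1$ on the high-probability event $R_\alpha\log t=o(1)$ and $\mathbb{E}[(\|U\|/t)^{2(\alpha-\hat\alpha_n)}\mid\|U\|>t]$ remains finite by regular variation. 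A Bernstein bound over the $n_t=|\mathcal{I}|$ conditionally i.i.d.\ terms gives $O(n_t^{-1/2})$ up to logarithmic factors; since $P(\|U\|>t)=\Theta(t^{-\beta})$ under \cref{assump:mvary}, a binomial concentration step yields $n_t=\Theta(t^{-\beta}n)$ with high probability, so $n_t^{-1/2}=O(t^{\beta/2}n^{-1/2})$. For term (ii), I would write $Y/\|U\|^{\hat\alpha_n}-Y/\|U\|^{\alpha}=(Y/\|U\|^{\alpha})(\|U\|^{\alpha-\hat\alpha_n}-1)$ and use the mean-value bound $|\|U\|^{\alpha-\hat\alpha_n}-1|\le|\alpha-\hat\alpha_n|\,\log\|U\|\cdot\|U\|^{|\alpha-\hat\alpha_n|}$; taking the tail expectation and invoking $\mathbb{E}[\log\|U\|\mid\|U\|>t]=\log t+O(1)$ (again by regular variation) produces the $\log(t)R_\alpha(n,\delta)$ term.

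For term (iii), I would use the standard doubly robust algebra: under \cref{assump:exo} (together with $U\perp(X,D)$ from \cref{assump:iid}), the cross term collapses to $\mathbb{E}\big[\frac{p(X)-\hat p(X)}{\hat p(X)(1-\hat p(X))}\,(\bar g(X,D,S)-\hat g(X,D,S))\mid\|U\|>t\big]$, which by Cauchy–Schwarz and \cref{assump:overlap} is $O(\sqrt{\mathbb{E}[(\hat p-p)^2]\,\mathbb{E}[(\hat g-\bar g)^2\mid\|U\|>t]})=O(\sqrt{R_p(n/2,\delta)\,R_g(t^{-\beta}n,\delta)})$, the $R_g$ error being evaluated at the effective tail sample size $t^{-\beta}n$. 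Term (iv) is controlled directly by \cref{assump:f_homo}, since $|\bar g(X,D,S)-g(X,D,S)|\le e(t)$ on $\{\|U\|>t\}$, contributing $O(e(t))$. Term (v) is the convergence of the tail-conditioned directional law to the limiting spectral measure: using $X\perp U$ and that $s\mapsto g(X,1,s)-g(X,0,s)$ is $2L$-Lipschitz, I would bound $|\eta_t-\eta|$ by the $1$-Wasserstein distance between $(U/\|U\|\mid\|U\|>t)$ and the spectral measure, and import the explicit rate under the structured Pareto model from \cite{zhang2023wasserstein}, yielding the second-order/von Mises bias $t^{-\min\{1,\beta\}}$ and the truncated-Pareto approximation bias $t^{-\beta s/(1-2s)}$ tied to the $\xi k^{-s}$ and $\zeta k^{(1-2s)/\beta}$ constants in \cref{assump:mvary}. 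Summing (i)–(v) gives the claimed bound.

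I expect the main obstacle to be the interaction of terms (i) and (ii): both require tail-moment control of $Y/\|U\|^{\hat\alpha_n}$ where $\hat\alpha_n$ is itself random (a function of $\mathcal{D}_1$), so I must first establish, on a high-probability event, that $\hat\alpha_n$ stays safely below $\beta$ and that $R_\alpha\log t=o(1)$ (using $R_\alpha(n,\delta)\to0$ and $\alpha<\beta$), before the regular-variation moment bounds can be applied uniformly and the pseudo-outcome scale is kept within a constant of its $\alpha$-version. A secondary difficulty is making the reduction in term (v) rigorous, since \cite{zhang2023wasserstein} controls spectral-measure \emph{estimation} rather than the deterministic population bias directly; I would isolate the deterministic bias component of their bound and verify the von Mises condition under \cref{assump:mvary}, which was already checked in the proof of \cref{lemma:mu_rate}.
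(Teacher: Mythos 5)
Your proposal is correct and takes essentially the same route as the paper's proof: the paper likewise splits $\hat{\eta}_{n,t}^{\tmop{DR}}-\eta$ into a statistical term (bounded by invoking the off-the-shelf DML bound of \citet{foster2023orthogonal}, plus a Bernstein argument showing $n_t=\Theta(nt^{-\beta})$ so that $n_t^{-1/2}=O(t^{\beta/2}n^{-1/2})$) and a bias term decomposed exactly into your pieces (ii), (iv), (v) --- the $\log(t)R_\alpha(n,\delta)$ exponent-substitution error, the $2e(t)$ homogeneity error, and the spectral bias bounded by $LW_1(\mathcal{L}^t_{U/\|U\|},\mathcal{L}_{U/\|U\|})\leqslant O(t^{-\min\{1,\beta\}}+t^{-\beta s/(1-2s)})$ via \citet[Proposition 3.1]{zhang2023wasserstein}, which indeed controls the deterministic population bias, resolving your stated concern. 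The only difference is granularity: where you unpack the DR statistical error into an explicit Bernstein step (i) plus a Neyman-orthogonality/Cauchy--Schwarz step (iii), the paper treats these jointly as a black-box application of standard DML theory.
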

\begin{proof}
  Let
  \begin{align*}
    \eta^t & =\mathbb{E} \left[ \frac{f (X, 1, U) - f (X, 0,
    U)}{\|U\|^{\hat{\alpha}_{n}}} \mid \|U\|> t \right],
  \end{align*}
  
  We have the following decomposition
  \begin{align*}
    | \hat{\eta}_{n, t}^{\tmop{DR}} - \eta | & \leqslant | \hat{\eta}_{n,
    t}^{\tmop{DR}} - \eta^t | + | \eta^t - \eta |.
  \end{align*}
  
  The first term comes from the standard statistical error of DR estimator, while the second term is the bias term caused by the finite threshold. For the first term, by standard DML theory {\citep{foster2023orthogonal}},
  we have
  \begin{align*}
    | \hat{\eta}_{n, t}^{\tmop{DR}} - \eta^t | & \leqslant O \left( \sqrt{R_p
    (n/2, \delta) R_g (n_t, \delta)} + n_t^{- 1 / 2} \right),
  \end{align*}
  
  where $n_t = \sum_{i = 1}^{n/2} I (\|U_i \|> t)$ is a random variable. By
  Bernstein's inequality, with probability at least $1 - \delta$,
  \begin{align*}
    n_t - n\mathbb{P}(\|U\|> t)/2 & \geqslant O \left( \log (1 / \delta) +
    \sqrt{n\mathbb{P}(\|U\|> t) \log (1 / \delta)} \right)
  \end{align*}
  
  Therefore, with the same probability, when $n \geqslant \Theta (\log (1 /
  \delta) t^{\beta})$.
  \begin{align*}
    n_t & \geqslant \frac{1}{4} n\mathbb{P}(\|U\|> t) = \Theta (nt^{- \beta})
  \end{align*}
  
  and we have
  \begin{align}
    | \hat{\eta}_{n, t}^{\tmop{DR}} - \eta^t | & \leqslant O \left( \sqrt{R_p
    (n/2, \delta) R_g (nt^{- \beta}, \delta)} + t^{\beta / 2} n^{- 1 / 2}
    \right),  \label{eq:eta_est}
  \end{align}
  
  where we use the monotonicity of $R_p, R_g$.
  
  For the second term (the bias term),
  \begin{align*}
    | \eta^t - \eta | & = \left| \mathbb{E} \left[ \frac{f (X, 1, U) - f (X,
    0, U)}{\|U\|^{\hat{\alpha}_{n}}} \mid \|U\|> t \right] -\mathbb{E}[g (X, 1, U
    /\|U\|) - g (X, 0, U /\|U\|)] \right|\\
    & \leqslant \left| \mathbb{E} \left[ \frac{f (X, 1, U) - f (X, 0,
    U)}{\|U\|^{\alpha}} - g (X, 1, U /\|U\|) - g (X, 0, U /\|U\|) \mid \|U\|>
    t \right] \right|\\
    & \quad \quad + |\mathbb{E}[g (X, 1, U /\|U\|) - g (X, 0, U /\|U\|)]
    -\mathbb{E}[g (X, 1, U /\|U\|) - g (X, 0, U /\|U\|) \mid \|U\|> t] |\\
    & \qquad + \left| \mathbb{E} \left[ \frac{f (X, 1, U) - f (X, 0,
    U)}{\|U\|^{\alpha}} \left( 1 - {\| U \|^{\alpha - \hat{\alpha}_{n}}}
    \right) \right] \right|\\
    & \leqslant 2 e (t) + |\mathbb{E}[g (X, 1, U /\|U\|) - g (X, 0, U
    /\|U\|)] -\mathbb{E}[g (X, 1, U /\|U\|) - g (X, 0, U /\|U\|) \mid \|U\|>
    t] |,\\
    & \qquad + \left| \mathbb{E} \left[ \frac{f (X, 1, U) - f (X, 0,
    U)}{\|U\|^{\alpha}} \left( 1 - {\| U \|^{\alpha - \hat{\alpha}_{n}}}
    \right) \right] \mid \|U\|> t \right|
  \end{align*}
  
  where we use Assumption \ref{assump:f_homo} in the last equality. By the
  error rate assumption in Theorem \ref{thm:rate},
  \begin{align*}
    \left| \mathbb{E} \left[ \frac{f (X, 1, U) - f (X, 0, U)}{\|U\|^{\alpha}}
    \left( 1 - {\| U \|^{\alpha - \hat{\alpha}_{n}}} \right) \right] \mid
    \|U\|> t \right| & \leqslant C\mathbb{E} \left[ \left| 1 - {\| U\|^{\alpha - \hat{\alpha}_{n}}} \right| \mid \| U \| > t \right]\\
    & = O (\log (t) R_{\alpha} (n, \delta)) .
  \end{align*}
  
  Since $g$ is $L$-Lipschitz continuous, the second term is upper bounded by
  Wasserstein distance $LW_1 (\mathcal{L}_{U /\|U\|}^t, \mathcal{L}_{U
  /\|U\|})$, where $\mathcal{L}_{U /\|U\|}^t$ is the distribution of $U
  /\|U\|$ conditioning on $\|U\|> t$ and $\mathcal{L}_{U /\|U\|}$ is its
  limiting distribution as $t \rightarrow \infty$. Therefore, we have
  \begin{align}
    | \eta^t - \eta | & \leqslant 2 e (t) + LW_1 (\mathcal{L}_{U /\|U\|}^t,
    \mathcal{L}_{U /\|U\|}) + O (\log (t) R_{\alpha} (n, \delta)) \nonumber\\
    & \leqslant 2 e (t) + O (t^{- \min \{1, \beta\}} + t^{- \beta s / (1 - 2
    s)} + \log (t) R_{\alpha} (n, \delta)),  \label{eq:eta_bias}
  \end{align}
  
  where we use {\citetext{\citealp{zhang2023wasserstein},  Proposition 3.1}}
  in the last inequality to upper bound the bias term $W_1 (\mathcal{L}_{U
  /\|U\|}^t, \mathcal{L}_{U /\|U\|})$. Combing (\ref{eq:eta_est}) and
  (\ref{eq:eta_bias}), we get
  \begin{align*}
    | \hat{\eta}_n^{t} - \eta | & \leqslant O (\sqrt{R_p (n/2, \delta) R_g
    (t^{- \beta} n, \delta)} + t^{\beta / 2} n^{- 1 / 2} + t^{- \min \{1,
    \beta\}} + t^{- \beta s / (1 - 2 s)} + e (t) + \log (t) R_{\alpha} (n,\delta)) .
  \end{align*}
\end{proof}

\begin{lemma}\label{lemma:eta_ipw}
  Under the assumption of Theorem \ref{thm:rate}, with probability at least $1
  - \delta$, we have
\begin{align*}
    | \hat{\eta}_{n, t}^{\text{IPW}} - \eta | & \leqslant O (R_p (n/2, \delta) +
    t^{\beta / 2} n^{- 1 / 2} + t^{- \min \{1, \beta\}} + t^{- \beta s / (1 -
    2 s)}) + e (t) .
  \end{align*}
\end{lemma}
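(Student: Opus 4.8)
The plan is to mirror the structure of the proof of Lemma~\ref{lemma:eta_rate}, splitting the error into a statistical term and a bias term via the triangle inequality
\[
  |\hat\eta_{n,t}^{\text{IPW}} - \eta| \leq |\hat\eta_{n,t}^{\text{IPW}} - \eta^t| + |\eta^t - \eta|,
\]
where $\eta^t = \mathbb{E}[(f(X,1,U) - f(X,0,U))/\|U\|^{\hat\alpha_n} \mid \|U\|> t]$. The bias term $|\eta^t - \eta|$ is completely analogous to the DR case, since it involves only the population functional and not the estimator: reusing the homogeneity bound of Assumption~\ref{assump:f_homo}, the Wasserstein bound of \citetext{\citealp{zhang2023wasserstein}, Proposition 3.1}, and the $\alpha$-error bound, it contributes $2e(t) + O(t^{-\min\{1,\beta\}} + t^{-\beta s/(1-2s)} + \log(t)R_\alpha(n,\delta))$. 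The new work is therefore confined to the statistical term.

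First I would establish that the IPW weights are effectively bounded on the tail event. By Assumption~\ref{assump:f_homo}, on $\{\|U\|> t\}$ the normalized regression $f(X,D,U)/\|U\|^\alpha$ is within $e(t)$ of the bounded function $g(X,D,U/\|U\|)$, so $Y_i/\|U_i\|^{\hat\alpha_n}$ is bounded up to noise and up to the $\log(t)R_\alpha$ discrepancy between $\alpha$ and $\hat\alpha_n$. Combined with the overlap Assumption~\ref{assump:overlap}, which keeps $\hat p(X_i)(1-\hat p(X_i)) \geq c(1-c)$, each summand of $\hat\eta_{n,t}^{\text{IPW}}$ is a bounded (or at worst light-tailed) random variable, which is what makes the subsequent concentration step possible.

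Next I would introduce the oracle estimator $\tilde\eta$ obtained by replacing $\hat p$ with the true propensity $p$ and decompose $|\hat\eta_{n,t}^{\text{IPW}} - \eta^t| \leq |\hat\eta_{n,t}^{\text{IPW}} - \tilde\eta| + |\tilde\eta - \eta^t|$. For the first piece, because $x \mapsto 1/x$ is Lipschitz on $[c,1-c]$, the perturbation of the inverse-propensity weights is controlled by $|\hat p - p| \leq R_p(n/2,\delta)$; this is precisely where the IPW estimator, lacking Neyman orthogonality, pays the linear rate $O(R_p(n/2,\delta))$ rather than the product rate $\sqrt{R_p R_g}$ of the DR estimator. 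For the second piece, exogeneity (Assumption~\ref{assump:exo}) together with the independence $U \perp (X,D)$ from Assumption~\ref{assump:iid} makes the oracle summand conditionally unbiased for $\eta^t$, so a concentration inequality applied to the bounded summands over the $n_t = |\mathcal{I}|$ tail observations yields $O(n_t^{-1/2})$. Finally, exactly as in Lemma~\ref{lemma:eta_rate}, I would apply Bernstein's inequality to guarantee $n_t \geq \Theta(n t^{-\beta})$ with probability $1-\delta$ once $n \gtrsim \log(1/\delta)t^\beta$, converting this into $O(t^{\beta/2} n^{-1/2})$. Summing the pieces gives the claimed bound.

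I expect the main obstacle to be the first statistical piece, namely showing that the linear-in-$R_p$ propensity perturbation remains $O(R_p)$: unlike the DR estimator, the IPW functional is first-order sensitive to the propensity, so I must lean on the overlap assumption to bound the weight denominators away from zero and on asymptotic homogeneity to keep the otherwise exploding factor $Y/\|U\|^{\hat\alpha_n}$ under control uniformly on the tail. The delicate point is that this control must hold simultaneously with the random thinning to the $n_t$ tail samples, so the weight bound and the concentration step have to be coupled carefully rather than treated independently.
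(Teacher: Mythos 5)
Your proposal is correct, and its skeleton coincides with the paper's: the same triangle-inequality split into a statistical term $|\hat{\eta}_{n,t}^{\text{IPW}} - \eta^t|$ and a bias term $|\eta^t - \eta|$, the same reuse of the bias analysis from \cref{lemma:eta_rate} (homogeneity, the Wasserstein bound of \citealp{zhang2023wasserstein}, and the $\log(t)R_\alpha$ term), and the same Bernstein argument turning $n_t^{-1/2}$ into $t^{\beta/2}n^{-1/2}$. Where you genuinely diverge is the statistical term: the paper disposes of it in one line by citing \cite[Theorem 1]{su2023estimated} to get $|\hat{\eta}_{n,t}^{\text{IPW}} - \eta^t| \leqslant O(R_p(n/2,\delta) + n_t^{-1/2})$, whereas you prove this bound from scratch via an oracle decomposition (swap $\hat{p}$ for $p$), Lipschitz continuity of $x \mapsto 1/x$ on $[c,1-c]$, and concentration of the conditionally unbiased oracle sum. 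Your route is more self-contained and correctly isolates why IPW pays the linear $O(R_p)$ rate (no Neyman orthogonality) while DR pays only $\sqrt{R_p R_g}$; the price is that you must supply the hypotheses the citation packages for free: boundedness (or light tails) of the residual $Y - f(X,D,U)$ normalized by $\|U\|^{\hat{\alpha}_n}$, overlap for the \emph{true} propensity $p$ (the paper's \cref{assump:overlap} literally constrains only $\hat{p}$), and the cross-fitting independence of $\hat{p}$, $\hat{\alpha}_n$ from $\mathcal{D}_2$, which is what makes the oracle summands i.i.d.\ conditional on the tail selection --- your closing remark about coupling the weight bound with the random thinning is exactly the right concern. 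One last bookkeeping point: the lemma as stated omits the $\log(t)R_\alpha(n,\delta)$ contribution, but the paper's own proof (and yours) produces it; your inclusion of that term is consistent with \cref{lemma:eta_rate} and with how the lemma is used in \cref{thm:rate}, so this is a typo in the statement rather than a defect of your argument.
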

\begin{proof}

Similar to Lemma \ref{lemma:eta_rate}, we have the following decomposition.
\begin{align*}
  | \hat{\eta}_{n, t}^{\text{IPW}} - \eta | & \leqslant | \hat{\eta}_{n,
  t}^{\text{IPW}} - \eta^t | + | \eta^t - \eta |.
\end{align*}

Term $| \eta^t - \eta |$ can be bounded in the same way as in the proof of
Lemma \ref{lemma:eta_rate}. By \cite[Theorem 1]{su2023estimated}, we have
\begin{align*}
  | \hat{\eta}_{n, t}^{\text{IPW}} - \eta^t | & \leqslant O (R_p (n/2, \delta) +
  n_t^{- 1 / 2}) = O (R_p (n/2, \delta) + t^{\beta / 2} n^{- 1 / 2}  + \log (t) R_{\alpha} (n,\delta)) .
\end{align*}
The rest of the proof is similar.   
\end{proof}

Now we are ready to prove Theorem \ref{thm:rate}.
\begin{proof}[Proof of \cref{thm:rate}]
  Note that by the asymptotic independence property of regularly varying
distribution,
\begin{align}
  | \hat{\theta}_{n, t}^{\tmop{DR}} - \theta^{\tmop{NETE}} | & = |
  \hat{\eta}_{n, t}^{\tmop{DR}} \cdot \hat{\mu}_n - \eta \cdot \mu | \notag \\
  & \leqslant | \mu | \cdot | \hat{\eta}_{n, t}^{\tmop{DR}} - \eta | + |
  \hat{\eta}_{n, t}^{\tmop{DR}} | \cdot | \hat{\mu}_n - \mu |. \label{eq:decomp}
\end{align}

By Lemma \ref{lemma:mu_rate} and \ref{lemma:eta_rate}, with high probability, $ \hat{\gamma}_n $ and $| \hat{\eta}_{n,t}^{\tmop{DR}} |$ is bounded. Note that by Lemma \ref{lemma:mu_rate},
\begin{align*}
  | \hat{\mu}_n - \mu | & = \left| \frac{1}{1 - \hat{\alpha}_{n} \hat{\gamma}_n} -
  \frac{1}{1 - \alpha \gamma} \right| = O (| \hat{\alpha}_{n} - \alpha | + |
  \hat{\gamma}_n - \gamma |) = O (\log (1 / \delta) n^{- 1 / (2 + \beta)} +
  R_{\alpha} (n, \delta))
\end{align*}

Therefore, by Lemma \ref{lemma:eta_rate} and (\ref{eq:decomp}),
\begin{align*}
  | \hat{\theta}_{n, t}^{\tmop{DR}} - \theta^{\tmop{NETE}} | & \leqslant O
  ( \sqrt{R_p (n/2, \delta) R_g (nt^{- \beta}, \delta)} + t^{\beta / 2}n^{- 1 / 2} + \log (1 / \delta) n^{- 1 / (2 + \beta)} \\
  &\quad\quad\quad\quad + t^{- \min \{1,\beta\}} + t^{- \beta s / (1 - 2 s)} + \log (t) R_{\alpha} (n, \delta) + e(t)) .
\end{align*}

The bound for $\hat{\theta}_{n, t}^{\tmop{IPW}}$ can be proven similarly.
  \end{proof}

\begin{corollary}[Convergence rate for IPW]
  \label{cor:rate_ipw}Under the assumptions of Theorem~\ref{thm:rate}, further
  suppose that
  \begin{align*}
    R_p (n, \delta) = \Theta (\log (1 / \delta) n^{- 1 / 2}) , R_g (n, \delta) = \Theta (\log (1 / \delta) n^{- 1 / 2}), R_\alpha (n, \delta) = \Theta( \log(1/\delta) n^{-c_\alpha}),
  \end{align*}
  for some $c_\alpha > 0 $, the following conclusions hold.
  \begin{enumerate}
    \item If $s \in (0, 1 / (2 + \max \{1, \beta\}))$, takes $t_n = \Theta (n^{
    (1 - 2 s) \hat{\gamma}_{n}})$, with probability at least $1 - \delta$,
    we have
    \begin{align*}
      | {{\widehat{\theta }_{n, t}^{\text{IPW} }} }   - \theta^{NETE} | & = O
      (e (t_n) + n^{- s} \log (1 / \delta) +  n^{-c_\alpha} \log(n)\log (1 / \delta)) .
    \end{align*}
    \item If $s \in [1 / (2 + \max \{1, \beta\}), 1 / 2)$, \ takes $t = \Theta
    (n^{(\hat{\gamma}_{n} / (1 + 2 \min \{1, \hat{\gamma}_{n} \})})$,
    with probability at least $1 - \delta$, we have
    \begin{align*}
      | {{\widehat{\theta }_{n, t}^{\text{IPW} }} }   - \theta^{NETE} | & = O
      (e (t_n) + n^{- 1 / (2 + \max \{\beta, 1\})} \log (1 / \delta) +  n^{-c_\alpha} \log(n)\log (1 / \delta)) .
    \end{align*}
  \end{enumerate}
\end{corollary}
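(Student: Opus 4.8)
The plan is to start from the finite-sample IPW bound (\ref{eq:rate_ipw}) of Theorem~\ref{thm:rate} and optimize over the threshold $t$, mirroring the term-balancing argument one would run for the DR estimator in Corollary~\ref{cor:rate}. The only structural difference between the two bounds is the leading nuisance term: under the assumed rates the IPW term is $R_p(n/2,\delta)=\Theta(\log(1/\delta)n^{-1/2})$ (by monotonicity, halving $n$ only changes the constant), whereas the DR term is $\sqrt{R_p(n/2,\delta)R_g(n_t,\delta)}=\Theta(\log(1/\delta)t^{\beta/4}n^{-1/2})$ after substituting $n_t=\Theta(nt^{-\beta})$. Since $t\ge 1$, both are dominated by the variance term $t^{\beta/2}n^{-1/2}$ for every admissible threshold, so the IPW optimization reduces to balancing exactly the same collection of terms as the DR one. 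First I would condition on the $(1-\delta)$-probability event on which the three estimation-error inequalities of Theorem~\ref{thm:rate} hold simultaneously with the concentration $|\hat\gamma_n-\gamma|=O((\log(2/\delta)/n)^{1/(2+\beta)})$ of Lemma~\ref{lemma:mu_rate}, adjusting the constant inside $\delta$ by a union bound.

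The main obstacle is that the threshold is chosen through the \emph{estimated} index $\hat\gamma_n$ rather than the true $\gamma=1/\beta$, so inserting $t_n=\Theta(n^{(1-2s)\hat\gamma_n})$ (regime~1) or $t_n=\Theta(n^{\hat\gamma_n/(1+2\min\{1,\hat\gamma_n\})})$ (regime~2) into a bound that contains the true exponent $\beta$ produces spurious factors of the form $n^{c(\hat\gamma_n-\gamma)}$. I would control these by writing $n^{c(\hat\gamma_n-\gamma)}=\exp(c(\hat\gamma_n-\gamma)\log n)$ and observing that, on the good event, $(\hat\gamma_n-\gamma)\log n=O((\log n)(\log(2/\delta)/n)^{1/(2+\beta)})=o(1)$, so every such factor is $1+o(1)=O(1)$ and perturbs only the constants, not the polynomial rate. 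Consequently the quantities $t_n^{\beta/2}$, $t_n^{-\min\{1,\beta\}}$ and $t_n^{-\beta s/(1-2s)}$ may be evaluated as if the exponent were the true $\gamma$, up to $O(1)$ factors; this is the one delicate point, and the rest is bookkeeping identical to the DR corollary.

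With that reduction, I would finish by splitting on $s$. In regime~1 the effective threshold behaves like $\Theta(n^{(1-2s)/\beta})$, for which the variance $t_n^{\beta/2}n^{-1/2}$ and the spectral bias $t_n^{-\beta s/(1-2s)}$ both equal $\Theta(n^{-s})$, while the remaining bias $t_n^{-\min\{1,\beta\}}=\Theta(n^{-(1-2s)\min\{1,\beta\}/\beta})$, the EVI error $\log(1/\delta)n^{-1/(2+\beta)}$, and the IPW nuisance term are all $O(n^{-s}\log(1/\delta))$ precisely because $s<1/(2+\max\{1,\beta\})$; the $\alpha$-term contributes $\log(t_n)R_\alpha(n,\delta)=\Theta(\log(n)\log(1/\delta)n^{-c_\alpha})$ and $e(t_n)$ is kept explicit, giving the first displayed rate. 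In regime~2 the threshold behaves like $\Theta(n^{\gamma/(1+2\min\{1,\gamma\})})$, for which the variance, the bias $t_n^{-\min\{1,\beta\}}$, and the spectral bias (equal to the target at the boundary $s=1/(2+\max\{1,\beta\})$ and decaying faster beyond it) all equal $\Theta(n^{-1/(2+\max\{\beta,1\})})$, a rate that dominates the EVI error $n^{-1/(2+\beta)}$ since $2+\max\{\beta,1\}\ge 2+\beta$; carrying along the $\alpha$- and $e(t)$-terms as before yields the second rate. The argument for the IPW estimator is thus completed by the identical balancing as for the DR estimator, the only change being that the dominated leading term is $R_p$ rather than $\sqrt{R_p R_g}$.
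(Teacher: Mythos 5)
Your proposal is correct and follows essentially the same route as the paper: it starts from the bound (\ref{eq:rate_ipw}) of Theorem~\ref{thm:rate}, substitutes the assumed nuisance rates, absorbs the (now simpler) $R_p$ term into the variance term $t^{\beta/2}n^{-1/2}$ exactly as the paper absorbs $\sqrt{R_pR_g}$, balances the same bias/variance terms in the two $s$-regimes, and controls the data-driven exponent via Lemma~\ref{lemma:mu_rate}, i.e.\ $n^{c(\hat\gamma_n-\gamma)}=1+o(1)$, which is precisely the (somewhat tersely stated) step $n^{\hat\gamma_n\beta}=\Theta(n)$-type control in the paper's proof of Corollary~\ref{cor:rate}. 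No gaps; your explicit treatment of the spurious $n^{c(\hat\gamma_n-\gamma)}$ factors is, if anything, cleaner than the paper's.
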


\begin{proof}[Proof of \cref{cor:rate}]
    By \cref{thm:rate} and the error rate assumption in \cref{cor:rate},  we have
    \begin{align*}
    \left| {{\widehat{\theta }_{n, t}^{\text{DR} }} }   -
    \theta^{NETE} \right|  & \leqslant O (\log(1/\delta) t^{\beta/4}n^{-1/2} + t^{\beta / 2} n^{-1 / 2}+ \log (1 / \delta) n^{- 1 / (2 +\beta)}  \notag\\
    & \quad\quad \quad \quad + t^{- \min \{1, \beta\}}  + t^{- \beta s / (1 - 2 s)} + \log(t)\log(1/\delta) n^{-c_\alpha}+ e (t)) \\
    &\leqslant O (\log(1/\delta) t^{\beta/2}n^{-1/2}+ \log (1 / \delta) n^{- 1 / (2 +\beta)} \\
    & \quad\quad \quad \quad + t^{- \min \{1, \beta\}}  + t^{- \beta s / (1 - 2 s)} + \log(t)\log(1/\delta) n^{-c_\alpha}+ e (t)).
    \end{align*}
    If $s \in (0, 1 / (2 + \max \{1, \beta\}))$, we have 
    \begin{align*}
        \left| {{\widehat{\theta }_{n, t}^{\text{DR} }} }   -
    \theta^{NETE} \right| &\leqslant O (\log(1/\delta) t^{\beta/2}n^{-1/2}+ \log (1 / \delta) n^{- 1 / (2 +\beta)} \\
    &\quad\quad \quad \quad + t^{- \beta s / (1 - 2 s)} + \log(t)\log(1/\delta) n^{-c_\alpha}+ e (t)).
    \end{align*}
    Takes $t_n = \Theta(n^{(1-2s)\hat{\gamma}_n})$, we get
    \begin{align}
        \left| {{\widehat{\theta }_{n, t_n}^{\text{DR} }} }   -
    \theta^{NETE} \right| &\leqslant O (\log(1/\delta) n^{{(1-2s)\hat{\gamma}_n}\beta /2 -1/2}+ \log (1 / \delta) n^{- 1 / (2 +\beta)} \notag\\
    &\quad\quad\quad\quad+ n^{- \hat{\gamma}_n\beta s} + \log(t)\log(1/\delta) n^{-c_\alpha}+ e (t_n)).\label{eq:small_s}
    \end{align}
    By \cref{lemma:mu_rate}, we have 
    \begin{align*}
        |\hat{\gamma}_n - \gamma| = |\hat{\gamma}_n - 1/\beta| \leqslant O(\log(1/\delta)n^{-1/(2+\beta)}). 
    \end{align*}
    Therefore, $ n^{\hat{\gamma}_n \beta} = 1 + O( n^{-1/(2+\beta)}) $. Plug this bound into (\ref{eq:small_s}) and we can get the results. Similarly, if $s \in [ 1 / (2 + \max \{1, \beta\}), 1/2)$, we have 
    \begin{align*}
        \left| {{\widehat{\theta }_{n, t}^{\text{DR} }} }   -
    \theta^{NETE} \right| &\leqslant O (\log(1/\delta) t^{\beta/2}n^{-1/2}+ \log (1 / \delta) n^{- 1 / (2 +\beta)} \\
    &\quad\quad \quad \quad + t^{- \min\{1,\beta\}} + \log(t)\log(1/\delta) n^{-c_\alpha}+ e (t)).
    \end{align*}
    Take $t_n = \Theta(\hat{\gamma}_n / (1 + 2\min\{1, \hat{\gamma}_n\}))$, we get the results. 
\end{proof}
  
\section{Experiment Details}
In this section, we introduce some details in our experiments. The two baseline estimators we consider are naive-IPW: 
\begin{align*}
    \widehat{\theta}_{n,t}^{\text{Naive-IPW}} = \frac{1}{t^\alpha n_t}\sum_{i>n/2: \|U_i\|\geqslant t} Y_i \Bigl(\frac{D_i}{\hat p(X_i)} - \frac{1-D_i}{1-\hat p(X_i)}\Bigr) .\label{eq:eta_ipw}
\end{align*}
and naive-DR: 
\begin{align*}
    \hat\eta_{n,t}^{\mathrm{Naive-DR}} = \frac{1}{t^\alpha n_t} \sum_{i> n/2: \|U_i\|\geqslant t}\Bigl[\hat g(X_i,1,U_i)-\hat g(X_i,0,U_i) + \frac{D_i-\hat p(X_i)}{\hat p(X_i)(1-\hat p(X_i))}\bigl(Y_i-\hat g(X_i,D_i,U_i)\bigr)\Bigr],
\end{align*}
where $n_t = \sum_{i= \lfloor n/2 \rfloor + 1}^{n} I(\|U_i\|\geqslant t)$ and  $\hat{p}(X)$ and $ \hat{g}(\cdot)$ are the estimated propensity function and the outcome function respectively. The nuisance estimation of $\hat{g}$ is obtained by running a regression $Y\sim (X,D,U)$. We clip the propensity to $[10^{-4}, 1 -10^{-4}]$ to ensure the overlap assumption (\cref{assump:overlap}). $\epsilon \sim \text{Unif} ([-1,1])$ in the data generation in synthetic experiments. We use sample splitting in our experiment, using the first half for nuisance estimation. In the experiment, we use the same threshold $t$ for all estimators, which is given by \cref{cor:rate}. To choose the threshold, we first use the adaptive Hill estimator \cite{boucheron2015tail} to get an estimation of EVI $\hat{\gamma}_{n}$ and then set the threshold to be $t = 0.25n^{(\hat{\gamma}_{n} / (1 + 2 \min \{1, \hat{\gamma}_{n} \})}$ as in \cref{thm:rate}. The approximate exponential $ \hat{\alpha}_n $ is coefficient of $\log(\|U\|)$ in linear regression $\log(|Y|) \sim \log(\|U\|)$. For the adaptive Hill estimator \cite{boucheron2015tail}, we follow authors' choice for hyperparameters and choose $l_n = 30, r(\delta) = \sqrt{\log\log(n)}$ and 
\begin{equation}
           k  = \min \left\{ k \in \{l_n, \cdots, n\} \  \text{and} \  \exists \ i \in \{l_n,\cdots, n\}, | \hat{\gamma}(i) - \hat{\gamma}(k) | > \frac{\hat{\gamma}(i) r_n (\delta)}{\sqrt{i}} \right\} - 1, \notag
\end{equation}
where $\hat{\gamma}(i) = \frac{1}{i} \sum_{j=1}^{i} \log\frac{\|U_{(j)}\|}{\|U_{(i+1)}\|}$.

We run logistic regression to estimate the propensity function and use random forest to model the outcome. 

For the semi-synthetic experiment, we apply the same hyperparameter as above to estimate NETE. We shift the data to make it positive and normalize each dimension by its 10 \% quantile. The \cref{fig:scatter_fig} shows the rough distribution of the wavesurge data after these transformations. 
\begin{figure}[!htb]
    \centering
    \includegraphics[width=0.5\linewidth]{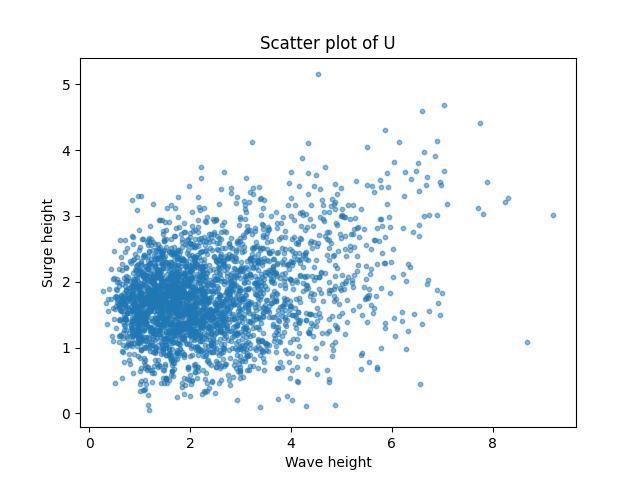}
    \caption{The scatter plot of wavesurge data.}
    \label{fig:scatter_fig}
\end{figure}

We now describe how we calculate test-set estimation in our experiments. By the data generation process, 
\begin{align*}
    \theta^{\text{NETE}} &= \lim_{t\rightarrow\infty}\mathbb{E}[ \frac{W^{\alpha_1}S^{\alpha_2}}{t^{\alpha_1 + \alpha_2}} \mid \|U\| > t] \\
    &= \lim_{t\rightarrow\infty}\mathbb{E}[ \frac{W^{\alpha_1}S^{\alpha_2}}{\|U\|^{\alpha_1 + \alpha_2}} \mid \|U\| > t] \cdot \frac{1}{1 - (\alpha_1 + \alpha_2)\gamma},
\end{align*}
where we use \cref{prop:id} in the second equality. We know the ground-truth $\alpha_1, \alpha_2$ and we can estimate the EVI $\gamma$ using the test set. Suppose the estimated EVI is $\hat{\gamma}$, we set the threshold to $ t_n = 0.25n^{(\hat{\gamma} / (1 + 2 \min \{1, \hat{\gamma} \})} $ and get estimation 

\begin{equation*}
    \hat{\theta}_{\text{test}}^{\text{NETE}} = \mathbb{E}_n[ \frac{W^{\alpha_1}S^{\alpha_2}}{\|U\|^{\alpha_1 + \alpha_2}} \mid \|U\| > t_n] \cdot \frac{1}{1 - (\alpha_1 + \alpha_2)\hat{\gamma}}. 
\end{equation*}
\end{document}